\newtheorem{lem}{Lemma}
\newtheorem{theorem}{Theorem}
\newtheorem{corollary}{Corollary}
\newtheorem{definition}{Definition}
\title{Max Markov Chain}
\author {
    % Authors
    Yu Zhang, %\textsuperscript{\rm 1}
    Mitchell Bucklew%, %\textsuperscript{\rm 2}
    %Third Author Name \textsuperscript{\rm 1}
}
\begin{document}

\maketitle

\begin{abstract}
In this paper, we introduce Max Markov Chain (MMC), a novel representation for a useful subset of High-order Markov Chains (HMCs) with sparse correlations among the states. 
MMC is parsimony while retaining the expressiveness of HMCs. 
Even though parameter optimization is generally intractable as with HMC approximate models, it has an analytical solution, better sample efficiency, and the desired spatial and computational advantages over HMCs and approximate HMCs. 
Simultaneously, efficient approximate solutions exist for this type of chains as we show empirically, which allow MMCs to scale to large domains where HMCs and approximate HMCs would struggle to perform. 
We compare MMC with HMC, first-order Markov chain, and an approximate HMC model in synthetic domains with various data types to demonstrate that MMC is a valuable alternative for modeling stochastic processes and has many potential applications. 
\end{abstract}

% \documentclass{article}
% \usepackage[utf8]{inputenc}
% \usepackage{amsmath}
% \usepackage{hyperref}
% \usepackage{amsthm}
% \newtheorem{lem}{Lemma}
% \newtheorem{theorem}{Theorem}
% \newtheorem{corollary}{Corollary}
% \newtheorem{definition}{Definition}

% \title{Max Markov Chain}
\author{Yu Zhang and Mitchell Bucklew}

% \begin{document}

% \maketitle

\section{Introduction}

A Markov Chain (MC) is a simple but powerful tool for modeling stochastic processes.
MCs assume that the current state is independent of all ancestral states given a fixed number of immediate parental and ancestral states (determined by the order of the chain). 
%Even though this assumption is  strong, 
The simplicity of MC makes it  a desired choice for modeling various stochastic processes~\cite{cowles1996markov,elfeki2001markov, ye2004robustness}. 
For domains with long-term dependencies, High-order Markov Chains (HMCs) are often needed.
% It provides the ability to model long-term dependencies, which makes it applicable to 
%while ridding of the needs for making strong modeling assumptions by being data driven. 
However, HMCs are generally expensive to maintain due to its significant parameter size, 
which is exponential in the order of the chain. 
This also makes learning HMCs extremely sample inefficient. 
Even though approximate models have been studied~\cite{raftery1994estimation, berchtold1995autoregressive, berchtold2002mixture}, 
they often suffer from intense computational needs due to the inherent complexity in parameter optimization. 

\begin{figure}
  \centering
  \includegraphics[width=1\linewidth]{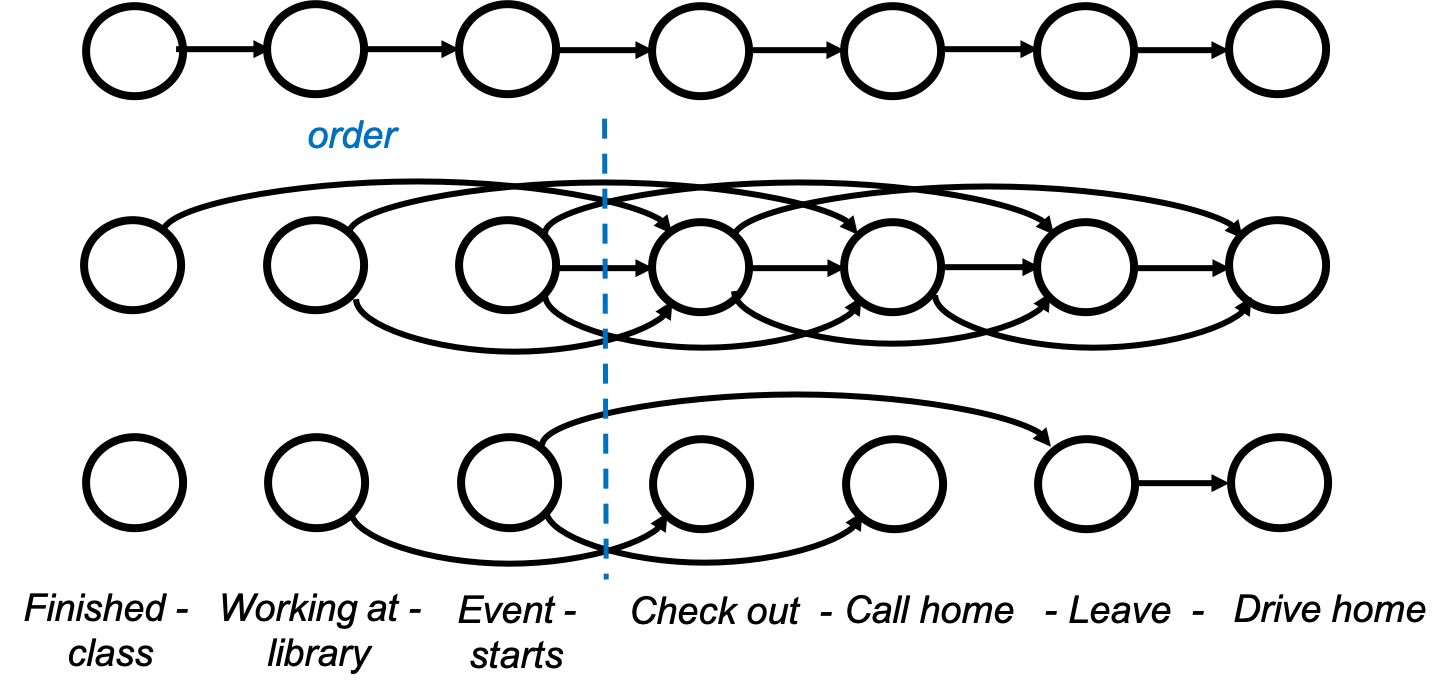}
  \caption{Graphical representations of, from top to bottom, First-order Markov Chain, High-order Markov Chain, and Max Markov Chain. The order is $3$ for both MMC and HMC.}
  \label{fig:models}
  \vskip-10pt
\end{figure}

We focus on discrete time-homogeneous HMCs~\cite{parzen1999stochastic} where the transition probabilities do not change over time. 
Instead of designing approximate models in the traditional way with the goal of approximating {\it any} HMC process, 
our aim here is to consider a subset of HMC processes that are useful and simultaneously easier to model.
Note that although approximate HMC models (e.g., \cite{raftery1994estimation}) are necessarily restricted in a similar way, 
less is understood about the expressiveness of such models without empirical analyses and careful interpretation of the results, i.e., what are the domains that can (cannot) be modeled well by such models. 
Furthermore, parameter optimization for such approximate models is often a challenge, rendering them impractical for many medium to large domains.  
% Designing models in an intentional way helps with choosing and innovating models that are tailored for specific needs.

In this paper, we introduce the Max Markov Chain (MMC) for modeling HMC processes where there exist only sparse correlations among the states.
See Fig. \ref{fig:models} for an illustration of the differences between MMC and other common MC models. 
Our target domains are those where a state can ``{\it generate}' a state according to a distribution in any future step within the order of the MMC. 
It is referred to as a max model since these {\it generating states} would also be competing with each other for state generation: only a single previous state is assumed to have generated the current state under consideration.
Such an assumption significantly simplifies the structure of the chain as seen in Fig. \ref{fig:models}.
Consider the following motivating example where MMC would be appropriate:

John arrived at the library and is working on his class assignments in the library. 
An event suddenly kicks off in the library which is disrupting to John. 
However, before John leaves, 
he will need to check out a few textbooks to continue working on the assignments at home.
% starts packing on the table in the library, it is likely that he would leave the library in a few minutes,
% although the exact when may depend on a few other factors. 
% For example, 
Since he will arrive at home earlier today, 
he may make a quick call before leaving and driving home. 
The correlations among these events are shown in Fig. \ref{fig:models}, which
can be modeled by an MMC. 
Modeling this scenario with HMC is possible but unnecessary while  a first-order Markov chain would be insufficient given the long-term dependencies. 
% Note that the following state of leaving is  irrespective of such events, 
% which are, in turn, also independent of the status of packing.
Certain types of human behaviors described as both reactive and  deliberative may fit the MMC model assumptions well~\cite{schmidt2000modelling}.
Such sparse correlations between the states in a process is what we strive to model in this work.
An intuitive way to think about an MMC process is to consider a transition system for which any transition may be delayed by an unspecified but bounded amount of time.
Note that such a feature of MMC differs from jumping chains~\cite{metzner2009transition} and options~\cite{sutton1999between}, which are about the duration of transitions. 

% MMC processes may fit well with human behavior that is described as both reactive and deliberative and, at the same time,  limited by working memory~\cite{}.

We formally introduce MMC and provide an analytical solution for parameter estimation based on maximum likelihood. 
However, such a solution is generally intractable unless the state space is small. 
We provide two alternatives that aim at sub-optimal solutions, based on hill climbing and a greedy heuristic, respectively. 
In our evaluation, we compare MMC with three other Markov models with different data types to demonstrate the features of MMC, such as its scalability, sample efficiency, and superior performance for domains that satisfy or approximately satisfy the MMC model assumptions. 
Results confirm that MMC is a valuable alternative for modeling stochastic processes. 
% methods for parameter learning. 
% It turns out that the optimal solution for MMC that maximizes the data likelihood can be computed exactly but generally intractable. 
% On the other hand, there are a few desirable features that differ MMC learning from other popular MCs'. 
% In addition, we introduce a greedy heuristic and a hill climbing approach for finding suboptimal solutions. 
% Results show that
% can be learned to optimize the data likelihood exactly. 
% However, computing the exactly solution is costly and an approximate solution is provided based on hill climbing for finding the local optimal solution.

\section{Related Work}

Markov chain was introduced to  model stochastic processes,
which has been applied to a diverse set of domains including physics~\cite{randall2006rapidly}, computer science~\cite{stewart1994introduction}, geography~\cite{chin1977modeling}, behavior and social sciences~\cite{benjamin1979use}.
Markov chains rely on the Markov assumption to simplify modeling, learning, and inference. 
Markov models have also been generalized to enable more expressiveness and model complexities, such as
the Hidden Markov Model (HMM)~\cite{baum1966statistical, fine1998hierarchical} and factored models~\cite{kearns1999efficient}. 
% Markov model has also been generalized to models such as hidden Markov model~\cite{}.
However, Markov chains require an exponential number of parameters in the order of the chain, which makes them intractable to maintain for complex domains. 
This also makes learning very sample inefficient. 
Approximate HMCs that are more parsimonious and quicker to learn are desired. 
Popular approximate models~\cite{jacobs1978discrete, raftery1985model} use auxiliary variables 
to combine the influences from each of the previous lags for generating the next state.
Such models have also been extended to consider model mixtures~\cite{berchtold2002mixture} where the influences to combine are specified with respect to one or multiple lags % on more than just one lag. 
% However, since these models often aim at parsimony, 
% their expressive power is often limited. 
% One particularly attractive idea of extending these approximate models is mixed models (https://doi.org/10.1214/ss/1042727943),
% which combine the representation power of simpler models 
for better approximations. 

There are two main limitations of the existing approximate HMC models. 
While they are more parsimonious than HMCs,
learning to optimize the parameters is computationally challenging,
often through complex numerical procedures~\cite{raftery1985model, berchtold2002mixture}.  
Second, 
learning these models are still sample inefficient. 
This is mainly due to the fact that
they do not impose parsimony in {\it model structures}, 
which may introduce overfitting for domains with sparse correlations among the states.  
This is because, as shown in Fig. \ref{fig:models}, not all connections among the states are necessary. 
Although there are general solutions for addressing overfitting~\cite{ying2019overview} such as parameter regularization,
the fundamental problem remains. 

Max Markov Chain (MMC) addresses these issues by imposing model parsimony while retaining the ability to model long-term dependencies. 
As we will show later in the discussion, the assumption made in MMC can be gradually relaxed to converge to the full HMC model, resulting in a spectrum of models that are increasing in model complexity. 
The model structure of MMC may appear similar to skip-chain sequence models~\cite{galley2006skip} and variable-order Markov chains~\cite{roucos1982variable}. 
However, in these prior models, the skipping structures are assumed to be provided a priori or must be learned in a very expensive process. 
Finally, one may view the sparse correlations among the states as discovering causal relationships~\cite{pearl2003statistics}.

\section{Approach}

Next, we first provide some background for high-order Markov chains and a popular approximate model. 
We then introduce Max Markov Chain along with methods for parameter optimization given training data.
We focus on complete data. 
Parameter optimization under partial observable data will be address in future work. 
%Theoretical results are presented. 

\subsection{Preliminaries}
A discrete time-homogeneous order Markov chain of order $K$ is specified as the probability distributions of the next state given the previous $K$ states (often referred to as {\it lags}):
\begin{equation}
    P(S_{t+K} | S_{t:t+K-1})
\label{eq:hmc-data}
\end{equation}

The parameter size of this model is $M^K(M - 1)$,
where $M$ is the size of the state space. 
The MTD model~\cite{raftery1985model}, an approximate HMC model, is specified as follows:

\begin{equation}
    P(S_{t+K} | S_{t:t+K-1}) = \Sigma_l \lambda_l q_{(t+l)(t+K)}
\end{equation}
where $q_{(t+l)(t+K)}$ is a value in an $M \times M$ transition matrix $Q$,
capturing the influence from state $S_{t+l}$ to $S_{t+K}$; $\lambda_l$ is a weight parameter associated with lag $l$ and satisfies $\sum_l \lambda_l = 1$.
Hence, an MTD model's parameter size is
$M(M - 1) + (K - 1)$.
Even though the MTD model is parsimonious in the parameter size, its model structure remains the same as that of HMC. It makes the assumption that the same state at the same lag contributes the same influence to $S_{t+K}$, regardless of the other states.

Our motivation for MMC is to address the limitations of the above models by further imposing parsimony on the model structure to expedite learning and make it scalable to large domains.
For the simplest MMC models that we study here, 
we make the assumption of sparse correlations such that only one state ($S_{t+l}$) in the lags is allowed to influence the current state ($S_{t+K}$). 
However, MMC allows the influence from $S_{t+l}$ to $S_{t+K}$ to be affected by the presence of the other states, in contrast to the basic MTD models. 
% It further reduces the number of model parameters and renders learning much easier. 
% As such, it would enable complex domains to be modeled efficiently while still giving the representation power of an HMC.
Imposing sparse model structure would necessarily introduce biases
when the modeling assumptions do not hold.
However, we argue that the advantages can outweigh the limitations in domains when the MMC assumptions hold or approximately hold. %with high order data when the training samples are insufficient.  
In the latter case,
MMC is designed to recover the most prominent correlations among the state pairs,
making it more sample efficient than MTD models. 
%and it is rare for such correlations to be completely missing.  
%At the same time, MMC is expected to significantly outperform approximate HMC models when the MMC assumptions hold, even only approximately. 
% In our experiment, we will compare our model with other less parsimonious models with various data size. 
% Fortunately, a fast learning process can help alleviate the issue by gradually increasing the model complexity in case the current model does not perform well.

% Our model is inspired by how humans model complex processes under limited cognitive capabilities, such as limited memory span and computational ability,
% which, at the same time,  is compensated by a flexible and dynamic ``attention'' mechanism. 
% In our model, we assume that the attention is introduced by observing certain {\it patterns} of states (more details later).

% However, these models are often difficult to train 
% since training must simultaneously optimize the parameters of the simpler models.
% This makes the adaptation of these models for online tasks difficult, such as for human-robot interaction.
% Also, these traditional models assume a fixed model structure and optimize the parameters only. 
% We would like the model structure to be updated with the model parameters simultaneously and quickly. 

\subsection{Max Markov Chain}
Our innovation here is Max Markov Chain (MMC) that captures a useful subset of high-order Markov processes. 
% In particular, we assume that the following set of first-order Markov processes:
% \begin{align*}
%     P(S_{t+K}| S_{t+k}), 0 \leq k \leq K - 1
% \end{align*}
In particular, an MMC is specified as follows:
% We presume that the $K-1$ order Markov process can be specified by a mixture of these processes as follows:
\begin{align}
     P(S_{t+K} | S_{t:t+K-1}) = P(S_{t+K}| S_{t+l^*})
\label{eq:max}
\end{align}
where $l^* = argmax_{S_{t+K}, l} P(S_{t+K} | S_{t+l})$.

Intuitively, the generation of the next state is dominated by one of the previous states (e.g., $S_{t+l^*}$) with the maximum influence, as measured by the maximum transition  probability to any state.
To distinguish MMC from other MCs, we henceforth refer to the transition probabilities in MMC as the {\it generation probabilities} and the transition distribution for each state as its {\it generation distribution}. 
%which is arguably more appropriate for MMC since a state can {\it compete}  with others for state generation.  

Essentially, the maximum generation probability for each state determines its priority for state generation. 
And the generation distribution of the new state depends only on the state with the highest priority in the lags. 
% This assumes that the high order Markov chain is reduced to a first-order Markov chain with skipping steps in the original process. 
Theoretically, such a specification still represents a K-order Markov chain
since the generation distribution of $S_{t+K}$ depends on all the $K$ lags, albeit in a restrictive way (i.e., winner-take-all). 
Even though MMC only uses the same amount of parameters as a first-order Markov chain (i.e., $M(M - 1)$), 
such a specification allows it
% TODO: need an illustration of the structure of such a chain
to encode a subset of high-order Markov processes to extend its capability to modeling long-term dependencies, however, at the cost of forgetting  adjacency information as captured by first-order chains. 
In other words, MMC does not care about which lag actually generates the data. 
Consequently, while MTD model is a strict generalization of first-order Markov chain since it can perfectly fit  a first-order chain, the same cannot be said about MMC. 
%At the same time, such a simplification enables such chains to be learned quickly. 

% $S_{K}$ is always determined by one of such ``skipped'' Markov processes. 
% While such a ``high order'' Markov process is clearly restrictive, 
% we show that it comes with an intuitive and fast learning method. 

\vskip-5pt
\begin{table}[h!]
  \begin{center}
    \label{tab:table1}
    \begin{tabular}{l| l|c|r} % <-- Alignments: 1st column left, 2nd middle and 3rd right, with vertical lines in between
      $\rightarrow$& \textbf{$s_1$} & \textbf{$s_2$} & \textbf{$s_3$}\\
      %$\alpha$ & $\beta$ & $\gamma$ \\
      \hline
      $s_1$ & 0 & 2 & 2\\
      $s_2$ & 2 & 0 & 0\\
      $s_3$ & 1 & 0 & 4\\
    \end{tabular}
    \caption{A data generation table for an MMC of order $2$ with $3$ states based on the training data  $3 3 3 3 3 3 1 2 3 1 2 3 1 $ (s is abbreviated for clarity), 
    assuming an SGO $s_1 \succ s_2 \succ s_3$.
    For every two consecutive states, the generation relationship for the next state is determined by which state among the two has a higher priority in the SGO. Each row indicates a generating state and the following numbers are the counts of a particular state generated by the generating state.}
    \vskip-10pt
    \label{tab:example}
  \end{center}
\end{table}

% Our proposed work differs from the traditional work. 
% Our proposed model is a flexible order Markov model that has the ability to capture long-term dependencies. 
% In particular, we model the following probability where
% $S_T$ is a {\it flexible state} that is defined as follows:
% \begin{align*}
%     P(S_{T} = s | S_1) &= P(s \in \{S_K\} | S_1) \\
%     &= P(S_2 = s| S_1) + P(S_3 = s | S_1, S_2 \neq s) + ... +  P(S_K = s | S_1, S_2 \neq s, ..., S_{K - 1} \neq s)
% \end{align*}
% We can see that this probability captures a mixture of Markov processes without the need to model a high order Markov model.
% Under the assumption made about the team process, 
% we propose to model teaming using this model.
% Determine the appropriate $K$ is another topic that will be studied in future work.

% We can also see that
% $P(S_T=j|S_1=i) = p_{ij}$ specifies a $m \times m$ matrix, where $m$ is the size of the state space.
% It very much resembles a transition matrix but with the transition defined flexibly over the time gap.  
% When assuming that $S_N$ is a stationary process, 
% these probabilities are well defined. 

\subsection{Parameter Estimation}

Given training data $\mathcal{D}$ in the form of $d: s_{t:t+K-1} \rightarrow s_{t+K}$ $(d \in \mathcal{D})$, the learning problem can be described as maximizing the likelihood of the data under the i.i.d. assumption:

\begin{equation}
     \max_{\mathcal{M}} P(\mathcal{D} | \mathcal{M}) = \max_{\mathcal{M}} \Pi_d P(d | \mathcal{M})
    \label{eq:obj}
\end{equation}
where $\mathcal{M}$ represents the space of MMCs. 
Directly optimizing the above is infeasible since there is an infinite number of MMCs. 
To discuss our solution,  
first, we introduce {\it State Generation Order} (SGO):

\begin{definition}[State Generation Order (SGO)]
A state generation order specifies the order of priorities  over the state space for state generation according to Eq.
\eqref{eq:max}.
\end{definition}

We use $s_a \succ s_b$ to denote that state $s_a$ has a higher priority than $s_b$ for state generation.
This requires $p_a^* \geq p_b^*$ to hold where $p_a^*$ ($p_b^*$) represents the maximum generation probability of $s_a$ ($s_b$) for any state.
It is clear that an MMC introduces a unique SGO, when assuming that ties are broken consistently. 
We optimize Eq. \eqref{eq:obj} according to a 2-step process.
In the first step, we determine the SGO of the state space. 
In the second step, we optimize the parameters according to the SGO. 
We will show next that optimizing the parameters with a given SGO is not difficult. 
In such a situation, optimizing Eq. \eqref{eq:obj} boils down to iterating through the set of SGOs, which is finite, to search for the optimal SGO. 
The MMC that maximizes Eq. \eqref{eq:obj} is then the optimal SGO under the optimized parameters.
% Hence, parameter estimation in MMC may be better viewed as an optimization problem. 

\subsubsection{Parameter Optimization under a Specified SGO}
Next, denoting the space of SGOs as $\mathcal{O}$, 
%which specifies $s_i \succ s_j$ for any state pair. 
%$s_i \succeq s_j$ represents that $P(s_i^*|s_i) \geq P(s_j^*|s_j)$ where $s_i^*$ is the highest probability state among all states that can be generated by $s_i$. % $P(s_i^*|s_i)$ is  
we consider optimizing for $P(\mathcal{D} | \mathcal{M}_o)$, where 
$o \in \mathcal{O}$ and $\mathcal{M}_o$ denotes the set of MMCs satisfying the given SGO, $o$.
Under the assumption that the data generation model is an MMC, when an SGO is given, the generation relationship among the data is also known (i.e., which state among $s_{t:t+K-1}$ generated $s_{t+K}$).
Refer to Table \ref{tab:example} for an example of how the generation relationship is determined. 
We denote the data entries that are generated by state $s$ in the training data $\mathcal{D}$ as $\mathcal{D}_s$, in the form of $s_{t+l} \rightarrow s_{t+K}$.
Each row in Table \ref{tab:example} corresponds to a $\mathcal{D}_s$ for the corresponding generating state $s$.
Then, the likelihood of $\mathcal{D}_s$ can be expressed by:
\begin{equation}
    P(\mathcal{D}_s | \mathcal{M}) = P(\mathcal{D}_s | p_1:p_m \text{ for state } s) = p_1^{n_1}p_2^{n_2},...,p_m^{n_m}
\label{eq:product}
\end{equation}
where $p_i$ is the generation probability of state $s_i$ for $s$ and $n_i$ is the number of times in $\mathcal{D}_s$ that $s_i$ is generated by $s$ (see Table \ref{tab:example}).
The $\{p\}$'s specify a distribution. 
Note that we intentionally refrain from specifying to which state these parameters (i.e., $n_1:n_m$ and $p_1:p_m$) belong to avoid clutter in the notation.  
Each state is associated with a separate set of these parameters. 
% However, they are always associated with a state $s$ that may refer to any state.
By taking the log of the likelihood, this transforms Eq. \eqref{eq:obj} to:
\begin{equation}
    \max_{\mathcal{M}} \log P(\mathcal{D} | \mathcal{M}) = \max_{\mathcal{M}} \Sigma_s \log P(\mathcal{D}_s | \mathcal{M})
\end{equation}

Given that the $\{p\}$'s specify a distribution,
$P(\mathcal{D}_s | \mathcal{M})$ in Eq. \eqref{eq:product} takes the unique maximum value when the $p$ values are aligned with the data, such that $p_i = \frac{n_i}{\sum_i n_i}$ (referred to as the {\it optimal value setting}).
The minimum value is at when one or more of the $p$ values are $0$.
%Furthermore, maximizing $P(\mathcal{D}_s | \mathcal{M})$ is equivalent to maximizing  $\log P(\mathcal{D}_s | \mathcal{M})$, 
%which is a strictly concave and easier to optimize. 
Furthermore, $P(\mathcal{D}_s | \mathcal{M})$ monotonically decreases
as the $\{p\}$'s deviate from the optimal value setting.
Next, we introduce a few theoretical results that inform the parameter optimization process to be discussed later. 

\begin{lem}
Given an SGO that is respected by the data, meaning that the state priorities according to the generation distributions estimated from data, i.e., $p_i = \frac{n_i}{\sum_i n_i}$ align with the SGO, 
the maximum data likelihood $P(\mathcal{D} | \mathcal{M})$ is achieved by setting the MMC generation probabilities according to $p_i = \frac{n_i}{\sum_i n_i}$ for each state, respectively.
\label{lem:maxp}
\end{lem}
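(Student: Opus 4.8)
The plan is to exploit the factorization of the likelihood that a fixed SGO induces and thereby reduce the statement to a collection of independent categorical (multinomial) maximum-likelihood problems, one per generating state, whose solutions happen to satisfy the SGO constraint exactly under the hypothesis. First I would fix the SGO $o$ and observe that, with ties broken consistently, every data entry $d : s_{t:t+K-1} \rightarrow s_{t+K}$ is attributed to a \emph{unique} generating state, namely the state of highest $o$-priority occurring among the lags $s_{t:t+K-1}$; hence $\mathcal{D}$ partitions into the blocks $\{\mathcal{D}_s\}$ (each row of Table~\ref{tab:example} being one such $\mathcal{D}_s$). Consequently $\log P(\mathcal{D} | \mathcal{M}) = \sum_s \log P(\mathcal{D}_s | \mathcal{M})$, and by Eq.~\eqref{eq:product} each term $\log P(\mathcal{D}_s | \mathcal{M}) = \sum_i n_i \log p_i$ depends only on the generation distribution $(p_1,\dots,p_m)$ attached to state $s$. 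Since distinct states own disjoint parameter blocks, maximizing the total log-likelihood over all tuples of generation distributions decouples into maximizing each block independently over the probability simplex $\Delta = \{p_i \ge 0,\ \sum_i p_i = 1\}$.

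Second, for a single block I would establish (or simply cite, since it is the monotonicity claim already asserted just before the lemma) the standard fact that $\sum_i n_i \log p_i$ attains its \emph{unique} maximum over $\Delta$ at $\hat p_i = n_i / \sum_j n_j$, under the convention $0\log 0 = 0$ so that coordinates with $n_i = 0$ are set to zero. One clean derivation rewrites $\sum_i n_i \log p_i = \big(\sum_j n_j\big)\sum_i \hat p_i \log p_i$ and applies Gibbs' inequality $\sum_i \hat p_i \log p_i \le \sum_i \hat p_i \log \hat p_i$, with equality iff $p = \hat p$; a one-line Lagrange-multiplier computation gives the same conclusion. A data-free block (a state that never occurs as a generating state) contributes a constant and is irrelevant to the likelihood, which I would flag as a trivial corner case.

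The crucial step --- the only place the hypothesis is used --- is verifying that the tuple $(\hat p^{(s)})_s$ of per-block maximizers actually lies in $\mathcal{M}_o$, i.e.\ that the MMC carrying these generation probabilities genuinely has SGO $o$. By construction the maximum generation probability of state $s$ under $\hat p^{(s)}$ is $p_s^* = \max_i \hat p^{(s)}_i$, which is precisely the maximum generation probability of $s$ as estimated from the data. The assumption that the SGO is respected by the data says exactly that the ordering of the quantities $\{p_s^*\}_s$ over the state space coincides with $o$; hence $(\hat p^{(s)})_s$ meets the SGO constraint and is feasible. Being the maximizer over the larger, unconstrained set of distribution tuples, it is \emph{a fortiori} the maximizer over $\mathcal{M}_o$, and uniqueness carries over because the total log-likelihood is a sum of blocks each uniquely maximized.

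I expect this last observation --- that the data-respecting hypothesis is precisely the condition rendering the per-block simplex constraints non-binding at the joint unconstrained optimum --- to be the conceptual heart of the argument; the rest is the routine categorical MLE together with bookkeeping about how a fixed SGO partitions $\mathcal{D}$. I would close by noting that this is exactly what makes the second stage of the two-step optimization cheap: for any candidate SGO it suffices to form the empirical generation distributions, check whether their $p_s^*$ ordering matches the SGO, and, if so, read off the optimal parameters.
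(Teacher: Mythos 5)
Your proposal is correct and follows essentially the same route as the paper's own proof: factorize the likelihood over the per-generating-state blocks $\mathcal{D}_s$ induced by the SGO, note that each block is maximized at the empirical distribution $p_i = n_i/\sum_i n_i$, and observe that the ``respected by the data'' hypothesis is exactly what makes this unconstrained block-wise optimum feasible under the SGO constraint. Your write-up merely fills in details the paper leaves implicit (Gibbs' inequality for the per-block maximizer, uniqueness, and the zero-count corner cases), which is a welcome but not substantively different elaboration.
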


\begin{proof}
The proof is straightforward given that $P(\mathcal{D} | \mathcal{M}) = \Pi_s P(\mathcal{D}_s | \mathcal{M})$. Since the $\mathcal{D}_s$'s are fully specified by the SGO, 
the generation probabilities for each state can be set according to the data (i.e., $p_i = \frac{n_i}{\sum_i n_i}$) under the given assumption without violating the constraints of the SGO (i.e., $s_a \succ s_b$ requires $p_a^* \geq p_b^*$). 
Since it is also the maximum likelihood possible under each $\mathcal{D}_s$, 
$P(\mathcal{D} | \mathcal{M})$ must also be the maximum for $\mathcal{D}$ under the SGO. 
\end{proof}

In Table \ref{tab:example}, assuming no violation with the given SGO, the $\{p\}$'s
 for $s_1$ and $s_2$ would be set to $\{0, 0.5, 0.5\}$ and $\{1.0, 0.0, 0.0\}$, respectively.  
 However, setting the parameters in this way would violate the given SGO, $s_1 \succ s_2 \succ s_3$.
 An observation is that since the priorities in the SGO are only determined by the maximum generation probabilities for each state, 
we would only need to focus on them to avoid any violations with the SGO. 
Note that it is straightforward to determine the state that should be assigned the maximum generation probability for each state: the state being generated with the largest state count, e.g., $s_3$ or $s_2$ for $s_1$, $s_1$ for $s_2$, and $s_3$ for $s_3$ in Table \ref{tab:example}. 
More generally, it must be that the generation probabilities are assigned to satisfy the order based on the state counts to maximize the likelihood. 
% do not violate the SGO. 

\begin{lem}
Given an SGO, when the maximum generation probability for each state is determined (denoted as $p^*$, respectively, for each state) and these probabilities align with the SGO, the maximum likelihood $P(\mathcal{D} | \mathcal{M})$ is achieved when the remaining generation probabilities for each state are set according to $\min(p^*, \frac{n_j}{\sum_j n_j}(1-\Sigma_k p_k))$ iteratively and in decreasing order of $\frac{n_i}{\sum_i n_i}$, where indices $k$ (and $j$) refer to states that have (and have not) been assigned to a generation probability value, respectively. States being generated with $0$ counts will be assigned equal probability masses if any mass is left unassigned.
\label{lem:remain}
\end{lem}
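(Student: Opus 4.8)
The plan is to reduce the statement to a single-state ``water-filling'' problem. Once the maximum generation probabilities $\{p^*\}$ are fixed for every state, the likelihood $P(\mathcal{D}\mid\mathcal{M})=\Pi_s P(\mathcal{D}_s\mid\mathcal{M})$ \emph{decouples} across generating states: distinct states share no parameters, and the only coupling --- the SGO requirement $s_a\succ s_b\Rightarrow p_a^*\ge p_b^*$ --- constrains only the $\{p^*\}$ values, which are given. So it suffices to show that, for a single generating state $s$, the prescribed assignment maximizes $P(\mathcal{D}_s\mid\mathcal{M})=\Pi_i p_i^{n_i}$ over distributions $p$ with $\sum_i p_i=1$ and every $p_i\le p^*$ (with the largest-count state pinned at $p^*$, as explained before the lemma). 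Taking logs, this is the maximization of the concave map $\sum_i n_i\log p_i$, so by the first-order optimality condition for concave maximization it is enough to exhibit a feasible $\hat p$ for which $\sum_i \frac{n_i}{\hat p_i}(q_i-\hat p_i)\le 0$ holds for every feasible $q$; concavity then certifies that $\hat p$ is a global maximizer.

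The first step is to pin down the shape of the procedure's output. Scanning the states in decreasing order of $n_i$ (equivalently of $\frac{n_i}{\sum_i n_i}$), a one-line computation shows that as soon as a state is given its proportional share of the currently-remaining mass rather than being capped at $p^*$, the share \emph{proposed} for the next state is a $\frac{n_{\mathrm{next}}}{n_{\mathrm{cur}}}$-fraction of it, hence no larger; consequently, once a state escapes the cap no later state is capped. Therefore the procedure caps exactly the first $r$ states (for some $r\ge 0$, necessarily the ones with the largest counts) at $p^*$, then splits the leftover $1-rp^*$ among the remaining positive-count states in exact proportion to their counts, i.e.\ $\hat p_j=n_j/\mu$ with the single factor $\mu=\frac{\sum_{j>r}n_j}{1-rp^*}$, and any mass still left --- which happens only when every positive-count state is capped --- is spread over the zero-count states. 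One also checks that each $\hat p_j$ is between $0$ and $p^*$ and that they sum to $1$, so $\hat p$ is feasible.

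The second step is to verify the first-order inequality for this $\hat p$. The cap test holding at state $r$ and failing at state $r+1$ becomes, after clearing denominators, $n_{r+1}\le\mu p^*\le n_r$, which together with the sorted order gives $\frac{n_i}{\hat p_i}=\frac{n_i}{p^*}\ge\mu$ for every capped state and $\frac{n_i}{\hat p_i}=\mu$ for every uncapped positive-count state (zero-count terms vanish). Substituting into $\sum_i \frac{n_i}{\hat p_i}(q_i-\hat p_i)$, using $q_i\le p^*$ and $\frac{n_i}{p^*}\ge\mu$ to bound each capped term $\frac{n_i}{p^*}(q_i-p^*)$ by $\mu(q_i-p^*)$, and then rearranging with $\sum_i q_i=\sum_i\hat p_i=1$, the whole expression collapses to $\mu\sum_{i: n_i=0}(\hat p_i-q_i)$; this is nonzero only when some positive-count state is uncapped, and in that case every zero-count $\hat p_i$ is $0$, so it equals $-\mu\sum_{i: n_i=0}q_i\le 0$. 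When instead all positive-count states are capped, $\sum_i n_i\log p_i=\sum_{i: n_i>0}n_i\log p^*$ is constant on the feasible set, so $\hat p$ is optimal outright and the equal split over the zero-count states is merely a canonical tie-break --- feasible because any distribution with all entries $\le p^*$ forces $p^*\ge 1/m$.

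The step I expect to be the main obstacle is the middle one: confirming that the greedy cap decisions, which at each step use only the \emph{local} ``remaining mass'' and ``remaining counts,'' agree with the \emph{global} threshold $\mu$, and then dispatching the degenerate cases --- an empty $\mathcal{D}_s$, ties among the counts, and the saturated case where probability mass must spill onto zero-count states --- each of which is easy in isolation but must be listed for the argument to be airtight. Once the structure of $\hat p$ is established, the optimization core is a routine appeal to concavity.
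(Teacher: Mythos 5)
Your proof is correct, and it is a genuinely different — and substantially more rigorous — argument than the paper's. The paper's proof works state by state and argues informally: it invokes the "monotonicity of Eq.~\eqref{eq:product} from the optimal value setting" and claims the greedy $\min(p^*,\cdot)$ assignment is "closest to the optimal setting," with deviations improvable by "moving closer" — but it never defines the metric in which closeness is measured nor why closeness in that metric orders the likelihood, so it is really an exchange-argument sketch. You instead make the decoupling across generating states explicit, recognize the per-state problem as maximizing the concave function $\sum_i n_i\log p_i$ over the polytope $\{\sum_i p_i=1,\ 0\le p_i\le p^*\}$, prove that the greedy procedure outputs the water-filling point (first $r$ largest-count states capped at $p^*$, the rest proportional to counts, via the key observation that an uncapped state's successor is proposed a $\tfrac{n_{\mathrm{next}}}{n_{\mathrm{cur}}}$-fraction of its share and hence is never capped), and then certify global optimality by the first-order condition $\nabla f(\hat p)^\top(q-\hat p)\le 0$. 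What your route buys is a genuine certificate of \emph{global} optimality of the per-state assignment and explicit handling of the degenerate cases (zero-count states, the saturated case where mass spills onto them), which the paper glosses over; what the paper's route buys is brevity and a presentation that stays close to the "deviation from $n_i/\sum_i n_i$" intuition used throughout the section. The only points worth tightening in your write-up are minor: state explicitly that capped states necessarily have positive counts (so the collapse to $\mu\sum_{i:n_i=0}(\hat p_i-q_i)$ is legitimate), and note that competitors $q$ with $q_i=0$ at a positive-count index have likelihood $-\infty$, so the non-differentiability of $\log$ at the boundary does not threaten the concavity certificate.
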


\begin{proof}
When the maximum generation probability is determined for a state $s$ (denoted by $p^*$), 
the remaining probability entries define an equation that is similar to Eq. $\eqref{eq:product}$, except for that the sum of the remaining $p$ values is $1 - p^*$.
%where $p^*$ is the highest generation probability for $s$.
Given the monotonicity of Eq. \eqref{eq:product} from the optimal value setting,
we would need each remaining $p$ value to be as close as possible to $\frac{n_j}{\sum_j n_j}(1-p^*)$
with $p^*$ now assigned to a state. %immediately after assigning $p^*$.
When all such values are equal to or smaller than $p^*$, we would be done since the steps above would be equivalent to setting the reamining $p$ values accordingly.

%When one or more values of $\frac{n_j}{\sum_j n_j}(1-p^*)$ are larger than $p^*$,
Otherwise, 
first, note that we concluded earlier  that the maximum data likelihood must be achieved by a set of values that respect the order given by $\frac{n_j}{\sum_j n_j}(1-p^*)$. 
%This is because otherwise we can always modify the values to move them closer to the optimal setting. 
Hence, setting the values according to $\min(p^*, \frac{n_j}{\sum_j n_j}(1-\Sigma_k p_k))$ iteratively in the decreasing order is setting the values closest to the optimal setting; otherwise, it is straightforward to see that we can set the $p$ values closer to the optimal value setting to improve the likelihood. 
%Hence, the conclusion holds. 
% A similar conclusion can be made for the other states as they are assigned probability generation values iteratively and in a decreasing order, as specified above.
% Note that such an order is fixed when assuming ties are broken consistently, and the best values should specify a state generation order consistent with that introduced by $\frac{n_i}{\sum_i n_i}$.
%It is not difficult to show that the order among the assignments does not affect the assigned values when values are assigned in a decreasing order. 
% Since this would just be a change of the normalization constant,
% the maximum is achieved in a similar setting for the remaining parameters. 
\end{proof}

For the example in Table \ref{tab:example}, consider a case when the maximum generation probability of $s_1$ is assigned to $s_3$ with a value of $0.4$. In such a case, 
we cannot assign according to $\frac{n_j}{\sum_j n_j}(1-p^*)$ since that would assign all the remaining  probability mass ($0.6$) to $s_2$, leading to $p_2 > p_1$ and a violation with the given SGO.
In such a case, we assign the maximum possible to $s_2$ ($0.4$), and the remaining ($0.2$) to $s_1$.
Note also that assigning $s_2$ to any value smaller than $0.4$ would decrease the likelihood. 

However, problems can occur when the data does not align with the given SGO. 
%Now, we can look into how to set the largest data generation probability values for each state. 
Without the loss of generality, consider two different states $s_x$ and $s_y$ such that $s_x \succ s_y$ in the given SGO but $p^\mathcal{D}_{x^*} < p^{\mathcal{D}}_{y^*}$, where $p^{\mathcal{D}}_{x^*}$ $(=\frac{n^*}{\sum_i n_i})$ represents the maximum generation probability for state $s_x$ estimated from data under the SGO and $n^*$ denotes the largest count of the state being generated by $s_x$. 

\begin{definition}[Misalignment]
A misalignment occurs when two states $s_x$ and $s_y$ satisfy that $s_x \succ s_y$ in the given SGO but $p^\mathcal{D}_{x^*} < p^{\mathcal{D}}_{y^*}$ based on the data.
\end{definition}

In Table \ref{tab:example}, we can see that there is a misalignment between $s_1$ and $s_2$ ($p^\mathcal{D}_{1^*} = 2/4$ and $p^\mathcal{D}_{2^*} = 2/2$), and a second misalignment between $s_1$ and $s_3$ ($p^\mathcal{D}_{1^*} = 2/4$ and $p^\mathcal{D}_{3^*} = 4/5$), with the given SGO.
Misalignment is directionless so can be represented as an undirected edge between two states. 
In such a representation, all misalignments introduce a connected graph among the states. 
We refer to such a graph as a {\it misalignment graph}.

\begin{lem}
Given an SGO with a misalignment as above, the maximum data likelihood is achieved when $p_{x^*} = p_{y^*}$,
where $p_{x^*}$ ($p_{y^*}$) represents the maximum generation probability of state $s_x$ ($s_y$).
\label{lm:max}
\end{lem}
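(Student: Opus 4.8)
The plan is to exploit the product structure of the likelihood together with the unimodality of each per-state factor, and then argue that the SGO inequality that links $s_x$ and $s_y$ must be tight at the optimum.

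First I would record what Lemma~\ref{lem:remain} buys us: once the maximum generation probability $p^*$ of each state is fixed (consistently with the SGO), the remaining generation probabilities of that state are pinned down by the stated procedure, so $P(\mathcal{D}\mid\mathcal{M})$ collapses to a product $\prod_s g_s(p^*_s)$ in which the factor $g_s$ depends only on the single scalar $p^*_s$. Next I would establish the shape of each $g_s$: by the monotonicity of Eq.~\eqref{eq:product} about its optimal value setting already noted above, $g_s$ is (strictly) unimodal in $p^*_s$, increasing up to the empirical value $p^\mathcal{D}_{s^*}=\frac{n^*}{\sum_i n_i}$ and decreasing beyond it. In particular $g_x$ is decreasing to the right of $p^\mathcal{D}_{x^*}$ and $g_y$ is increasing to the left of $p^\mathcal{D}_{y^*}$, and by the misalignment $p^\mathcal{D}_{x^*}<p^\mathcal{D}_{y^*}$.

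With this in hand I would argue by contradiction. Suppose a maximizer of $P(\mathcal{D}\mid\mathcal{M})$ over the MMCs respecting the SGO has $p_{x^*}>p_{y^*}$. Since $p_{x^*}$ enters only the factor $g_x$, $p_{y^*}$ enters only $g_y$, and the only SGO constraint directly coupling the two is $p_{x^*}\ge p_{y^*}$: if $p_{x^*}>p^\mathcal{D}_{x^*}$, I can decrease $p_{x^*}$ slightly, which strictly increases $g_x$ while leaving every other factor and every SGO inequality intact --- contradicting optimality; otherwise $p_{y^*}<p_{x^*}\le p^\mathcal{D}_{x^*}<p^\mathcal{D}_{y^*}$, so I can instead increase $p_{y^*}$ slightly toward $\min(p_{x^*},p^\mathcal{D}_{y^*})$, which strictly increases $g_y$ while leaving everything else intact --- again a contradiction. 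Hence $p_{x^*}=p_{y^*}$ at the optimum, and the same argument shows that imposing $p_{x^*}=p_{y^*}$ does not forfeit the maximum.

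The main obstacle is the bookkeeping in the perturbation step when $s_x$ and $s_y$ are \emph{not} adjacent in the SGO: the chain $p_{x^*}\ge\cdots\ge p_{y^*}$ of intermediate states can be taut, so a one-sided nudge of $p_{x^*}$ (or $p_{y^*}$) may violate an inequality with a neighbor. I expect to handle this with a pool-adjacent-violators style argument: either the nudge is feasible, or the offending neighbor already sits at the same value and can be moved together with $s_x$ (respectively $s_y$) as a block, whose common value should be the arg-max of the product of the block's factors; iterating collapses the whole run of states between $s_x$ and $s_y$ to a single value. Verifying that this collapse is always an improvement, using the unimodality of the $g_s$'s and the misalignment inequality, is the delicate part of the write-up.
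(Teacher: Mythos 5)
Your argument is essentially the paper's own proof: both establish that each state's contribution to the likelihood is unimodal about its empirical optimum $p^{\mathcal{D}}_{s^*}$ and then derive a contradiction by perturbing $p_{x^*}$ and/or $p_{y^*}$ toward each other when $p_{x^*}>p_{y^*}$, so the approaches coincide. Your version is somewhat more careful than the paper's --- the explicit case split on whether $p_{x^*}>p^{\mathcal{D}}_{x^*}$, and the observation that a taut chain of intermediate SGO constraints between $s_x$ and $s_y$ can block a one-sided nudge (a gap the paper's proof silently skips and only implicitly resolves later via Corollary~\ref{co:same} and Theorem~\ref{th:max}) --- but the core idea is the same.
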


\begin{proof}
%We prove it by contradiction. 
%Assuming that we have optimized the parameters under the conditions above.
Given $s_x \succ s_y$ in the SGO, it requires that  $p_{x^*} \geq p_{y^*}$.
%The data likelihood will be in proportion to $p_{xk^*}^{n_x^*}p_{yk^*}^{n_y^*}$. 
Given Lemma \ref{lem:maxp}, the 
maximum contributions of $s_x$ and $s_y$ to the data likelihood are achieved at $p_{x^*} = p^\mathcal{D}_{x^*}$ and $p_{y^*} = p^\mathcal{D}_{y^*}$, respectively,  without any constraint from the SGO.  
%(i.e., the optimal contribution to the likelihood from each individual state),
However, this would imply that $p_{x^*} < p_{y^*}$, leading to a violation with the SGO.
Since the individual contribution is decreasing from its maximum as the values deviate from the optimal value settings for both $s_x$ and $s_y$, 
the maximum combined contribution must be achieved at $p_{x^*} = p_{y^*}$ under the constraint of $p_{x^*} \geq p_{y^*}$.

We prove this above result by contradiction. 
Assume that the maximum is achieved at $p_{x} > p_{y}$ instead. 
In such a case, we can update $p_{x}$ and  $p_{y}$ to share a value in $(p_{y}, p_{x})$ to move both of them closer to the optimal value setting in which $p_{x^*} < p_{y^*}$.
This will increase the likelihood, resulting in a contradiction with the assumption made. 
% Such an update also add or subtract probability mass to the other generation probabilities to allow them to be closer to the optimal setting. 
% This is because otherwise we can set them to be the same value to move both individual contributions closer to their respective optimal setting. 
\end{proof}

\begin{corollary}
For all states that are connected (directly or indirectly) via misalignments in the misalignment graph, the maximum data likelihood is achieved when their maximum generation probabilities are all the same. 
\label{co:same}
\end{corollary}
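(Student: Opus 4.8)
The plan is to obtain Corollary~\ref{co:same} from Lemma~\ref{lm:max} by exploiting that ``having equal maximum generation probability at a likelihood maximizer'' is a transitive relation, so it propagates along paths in the misalignment graph. I would organize this as an induction on the size of a set of states that is connected through misalignment edges, with Lemma~\ref{lm:max} as the sole engine.

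For the base case, a connected set of two states is exactly one misalignment, and Lemma~\ref{lm:max} already states that at any maximizer of $P(\mathcal{D} \mid \mathcal{M})$ their maximum generation probabilities are equal. For the inductive step, let $C$ be a set of states that is connected in the misalignment graph with $|C| > 2$. Since the induced subgraph on $C$ is connected, it contains a spanning tree; pick a leaf $v$ of that tree with tree-neighbour $u$. Then $C \setminus \{v\}$ is still connected via the remaining tree edges, each of which is a genuine misalignment, so the induction hypothesis gives that at any maximizer all states of $C \setminus \{v\}$ carry a common maximum generation probability $q$; in particular $p_{u^*} = q$. The pair $(u,v)$ is itself a misalignment, so one more application of Lemma~\ref{lm:max} yields $p_{v^*} = p_{u^*} = q$ at that maximizer, whence every state of $C$ has maximum generation probability $q$. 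Equivalently, without induction: for any two states $s, s' \in C$ I would fix a path of misalignment edges joining them and chain the equalities $p_{s^*} = \cdots = p_{s'^*}$ edge by edge via Lemma~\ref{lm:max}.

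The part that requires care — and what I expect to be the real obstacle — is ensuring that Lemma~\ref{lm:max} is strong enough to be chained in this way, on two counts. First, the misalignment graph may contain an edge $(s_x, s_y)$ for states that are \emph{not} SGO-adjacent, with several states ranked between them; the hypothesis of Lemma~\ref{lm:max} only demands $s_x \succ s_y$ and $p^{\mathcal{D}}_{x^*} < p^{\mathcal{D}}_{y^*}$, so it still applies, but a careful write-up should verify that the feasibility-preserving perturbation in the proof of Lemma~\ref{lm:max} can be executed simultaneously for the intermediate states, i.e. as a coordinated move of a whole block of maximum generation probabilities that keeps the SGO-imposed chain of inequalities intact while strictly improving $\Pi_s P(\mathcal{D}_s \mid \mathcal{M})$. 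Second, Lemma~\ref{lm:max} must be read as a necessary condition on \emph{every} maximizer — which it is, since its proof rules out $p_{x^*} > p_{y^*}$ at a maximizer by producing a strictly improving feasible perturbation from the monotone decrease of Eq.~\eqref{eq:product} away from the optimal value setting — so that all the pairwise equalities above genuinely hold at one and the same maximizer and the transitive-closure argument closes.
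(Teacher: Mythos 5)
Your argument matches the paper's: the corollary is proved there in one line as a ``direct result of Lemma~\ref{lm:max}'', transitively chaining the pairwise equalities along misalignment edges (``if $s_x$ and $s_y$, and $s_y$ and $s_z$, are misaligned, they must all share the same maximum generation probabilities''), which is exactly what your spanning-tree induction formalizes. The caveats you flag --- that Lemma~\ref{lm:max} must be read as a necessary condition at every maximizer, and that the improving perturbation must remain feasible when other states sit between $s_x$ and $s_y$ in the SGO --- are genuine subtleties that the paper's one-line justification does not address, so your write-up is if anything more careful than the original.
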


This is a direct result of Lemma \ref{lm:max}. 
In other words, if $s_x$ and $s_y$, and $s_y$ and $s_z$, are misaligned, they must all share the same maximum generation probabilities to achieve the maximum data likelihood. 
The implication here is that all the states will be divided into connected subgraphs and all states in each connected subgraph must share the same maximum generation probability. 
For Table \ref{tab:example}, given the misalignments, all the states must share the same maximum generation probability under the given SGO. 
In such a case, we can solve for the value of this maximum generation probability by optimizing the following:
%for the maximum value 
\begin{equation}
    \max_p (1-p)^2p^2p^2(1-p)p^4
\label{eq:example}
\end{equation}
\vskip-5pt
which reaches the maximum value at $p = 8/11$.

\begin{theorem}
A (local) maximum data likelihood is achieved by 
% optimizing the concave function $\log P(\mathcal{D} | \mathcal{M})$ under the set of {$p_c$},
% where each $p_c$ is the maximum generation probability shared among a connected subgraph, c, in the misalignment graph.
% The remaining generation probabilities for each state is assigned according to Lemma \ref{lem:remain}.
assigning the same maximum generation probabilities to states in a subgraph according to $\frac{\sum_c n^*}{\sum_{c} \sum_i n_i}$, where $c$ denotes the states in a connected subgraph in the misalignment graph and $n^*$ denotes the largest count of the state generated by each state in the subgraph. 
The remaining probabilities for each state are assigned according to Lemma \ref{lem:remain}.
%the maximum generation probability shared among a connected subgraph, C, in the misalignment graph.
% The remaining generation probabilities for each state in each subgraph is then assigned according to $\min(p_C, \frac{n_j}{\sum_j n_j}(1-\sum_k p_k))$ iteratively and in decreasing order of $\frac{n_j}{\sum_j n_j}$ after assigning $p_C$, where $p_1 = p_C$.
% Assuming ties are broken consistently for state generation (Eq. \eqref{eq:max}), the above process terminates in valid probability values for all states that maximize $P(\mathcal{D} | \mathcal{M}_o)$.
\label{th:max}
\end{theorem}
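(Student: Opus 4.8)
The plan is to reduce the statement, via Corollary~\ref{co:same}, to an independent one-parameter optimization on each connected component of the misalignment graph, and then to solve that scalar problem with elementary calculus.

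First I would fix the given SGO. By Corollary~\ref{co:same}, at any likelihood-maximizing parameter setting the states of each connected component $c$ of the misalignment graph all carry the same maximum generation probability, which I denote $p_c$; distinct components interact only through non-binding ordering constraints of the SGO, so since $P(\mathcal{D}\mid\mathcal{M})=\Pi_s P(\mathcal{D}_s\mid\mathcal{M})$ the objective splits into one sub-problem per component. Within a component, Lemma~\ref{lem:remain} tells us that once $p_c$ is pinned as the maximum generation probability of a state $s$, the remainder of $s$'s generation distribution --- and hence the entire factor $P(\mathcal{D}_s\mid\mathcal{M})$ --- is determined. So the component sub-problem becomes: choose one scalar $p=p_c\in(0,1)$ to maximize $\prod_{s\in c} f_s(p)$, where $f_s(p)$ denotes that determined value of $P(\mathcal{D}_s\mid\mathcal{M})$.

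Next I would write $f_s(p)$ in closed form. Pinning the largest-count entry of $s$ to $p$ contributes $p^{n^*}$, where $n^*=n^*_s$ is that count; by Lemma~\ref{lem:remain}, as long as the $\min$ there is not active each remaining entry $j$ of positive count is set to $\tfrac{n_j}{N_s-n^*}(1-p)$, writing $N_s=\sum_i n_i$ for the total count of $s$, and contributes $\big(\tfrac{n_j}{N_s-n^*}(1-p)\big)^{n_j}$, while zero-count entries contribute $1$. Substituting into Eq.~\eqref{eq:product} and collecting the $p$-dependence gives $f_s(p)=C_s\,p^{n^*}(1-p)^{N_s-n^*}$, where $C_s=\prod_j\big(\tfrac{n_j}{N_s-n^*}\big)^{n_j}$ does not depend on $p$. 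Hence $\prod_{s\in c}f_s(p)=\big(\prod_{s\in c}C_s\big)\,p^{\sum_c n^*}(1-p)^{\sum_c(N_s-n^*)}$, whose logarithm is strictly concave on $(0,1)$; setting its derivative to zero, $\tfrac{\sum_c n^*}{p}=\tfrac{\sum_c(N_s-n^*)}{1-p}$, yields the unique maximizer $p=\tfrac{\sum_c n^*}{\sum_c N_s}=\tfrac{\sum_c n^*}{\sum_c\sum_i n_i}$ --- exactly the claimed value --- and the remaining probabilities then follow from Lemma~\ref{lem:remain}. Carrying this out on every component produces the stated assignment, a (local) maximum of $P(\mathcal{D}\mid\mathcal{M})$ for the given SGO; Eq.~\eqref{eq:example} is the special case with $\sum_c n^*=8$ and $\sum_c\sum_i n_i=11$.

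The main obstacle --- and, I suspect, the reason for the ``(local)'' hedge --- is justifying the phrase ``as long as the $\min$ is not active'': one must verify that the computed $p=\tfrac{\sum_c n^*}{\sum_c\sum_i n_i}$ actually lies in the regime where the smooth form $f_s(p)=C_s p^{n^*}(1-p)^{N_s-n^*}$ holds for every $s\in c$, i.e.\ where $\tfrac{n_j}{N_s-n^*}(1-p)\le p$ for all relevant $j$. For the states whose maximum generation probability is raised relative to its data optimum $n^*_s/N_s$ this follows from a mediant argument, since $\tfrac{\sum_c n^*}{\sum_c N_s}$ is the count-weighted mean of the per-state optima and hence at least their minimum, which already dominates the needed threshold; for the states that are lowered the bound can fail, and then $f_s$ acquires a clipped branch (further entries also pinned to $p$), the still-concave objective $\log\prod_s f_s$ is maximized at a different $p$, and the construction only realizes a local optimum restricted to the smooth region --- so a fully rigorous argument must localize there, which is the delicate point. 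I would dispatch the degenerate cases separately: singleton components reduce to Lemma~\ref{lem:maxp}, and ties for the largest count or all-zero rows are absorbed by the equal-mass clause; global optimality over the entire model class is then recovered only through the outer search over all SGOs.
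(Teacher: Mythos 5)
Your proposal is correct and follows essentially the same route as the paper's own (much terser) proof: use Corollary~\ref{co:same} to collapse each connected component to a single scalar $p$, write the component's likelihood in the form $p^{\sum_c n^*}(1-p)^{\sum_c(\sum_i n_i - n^*)}$ times a constant, and set the derivative to zero to obtain $p=\frac{\sum_c n^*}{\sum_c\sum_i n_i}$, exactly as the paper illustrates with Eq.~\eqref{eq:example}. You go beyond the paper in explicitly identifying the regime where the $\min$ in Lemma~\ref{lem:remain} is inactive (so the smooth closed form for each factor is valid) as the delicate point behind the ``(local)'' qualifier --- a caveat the paper's proof leaves implicit in its ``straightforward to verify'' step.
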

% In other words, it identifies an MMC that maximizes the likelihood of data while satisfying a given SGO $o$.

\begin{proof}
% Essentially, the different states in any connected component or subgraph will share the same data generation probabilities, which are
% estimated from the collection of data generated by all the states in the component. 
% In such a case, 
The likelihood contributed by a connected subgraph assumes a similar form as in Eq. \eqref{eq:product},
except that the sum of the generation probabilities from individual states must sum to $1$, respectively. %is equal to the number of states in the component. 
Given Corollary \ref{co:same}, 
we can formulate the derivative of the data likelihood (similar to that in Eq. \eqref{eq:example}). It is then straightforward to verify that setting the parameters as shown above indeed corresponds to a local maximum.
Conditions on when such a local maximum is also a global maximum will be studied in our future work.
%derive the value for the maximum generation probabilities as above. 
% The normalization constraints on the probabilities of individual states do not affect where the maximum likelihood value is obtained with respect to the maximum generation probabilities.
% One can readily show that $\frac{\sum_c n^*}{\sum_{c} \sum_i n_i}$ is still where the maximum likelihood is obtained under these constraints while setting the rest of the parameters according to Lemma \ref{lem:remain}. 
% this is because the normalization constraints on the generation probability distribution for the individuals state automatically encode that for the 
%since the probability simplex is symmetrical in the parameter space. 
% Furthermore, according to Lemma \ref{lem:remain}, 
% the maximum likelihood is achieved by setting the rest of the parameters once the highest generation probabilities are determined. 
%$\frac{\sum_c n_{i}}{\sum_{c} \sum_i n_i}$. 
% The $\{p_c\}$'s are restricted according to Corollary \ref{co:same}. 
% % These ensure that the constraints from the SGOs are also satisfied. 
% % Since this satisfies the requirement from Corollary \ref{co:same} (and the constraints from the SGO are also satisfied) and 
% Hence, the optimization outcome would still achieve the maximum log likelihood even though just over a small set of parameters $\{p_c\}$.
\end{proof}

This theoretical result provides an analytical solution to the parameter estimation problem under a given SGO. 
For the example in Table \ref{tab:example}, we can use this result to derive the maximum generation probability for all three states, $p_c = \frac{2+2+4}{2+2+2+1+4}$, given that they form a connected subgraph.

\subsubsection{Parameter Optimization under Unknown SGO} Next, we extend parameter optimization to unknown SGO. It involves iterating through all possible SGOs.
\begin{theorem}
Iterating through all SGOs $\mathcal{O}$, the MMC, resulted from $o \in \mathcal{O}$ after parameter optimization according to Theorem \ref{th:max}, that maximizes $P(\mathcal{D} | \mathcal{M}_o)$ is the MMC that maximizes the data likelihood among all possible MMC models. 
\end{theorem}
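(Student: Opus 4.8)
The plan is to reduce the maximization over the infinite space of all MMC models to a finite maximization over SGOs. The structural fact to lean on, already observed, is that every MMC induces a well-defined SGO once ties are broken consistently; equivalently, every MMC lies in $\mathcal{M}_o$ for at least one $o\in\mathcal{O}$, so the space of all MMCs equals $\bigcup_{o\in\mathcal{O}}\mathcal{M}_o$ and $\mathcal{O}$ is finite (there are finitely many priority orderings of the $M$ states). First I would record the elementary identity
\begin{equation}
\max_{\mathcal{M}} P(\mathcal{D}\mid\mathcal{M}) \;=\; \max_{o\in\mathcal{O}}\Bigl(\,\max_{\mathcal{M}\in\mathcal{M}_o} P(\mathcal{D}\mid\mathcal{M})\Bigr),
\end{equation}
whose outer maximum is over a finite set and is therefore attained.

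Next I would show that, for each fixed $o$, the inner maximum is attained and equals $P(\mathcal{D}\mid\mathcal{M}_o)$, where here $\mathcal{M}_o$ denotes the single MMC produced by the analytical parameter assignment of Theorem \ref{th:max}. Attainment is immediate: the feasible set is nonempty (uniform generation distributions satisfy any SGO) and, viewed inside a product of probability simplices cut out by the closed constraints $p_a^*\ge p_b^*$, it is compact, while $P(\mathcal{D}\mid\cdot)$ is a product of powers of the parameters and hence continuous (recall that, as in Table \ref{tab:example}, fixing the SGO fixes which lag state generates each data entry, so the likelihood factorizes as $\prod_s\prod_i p_i^{n_i}$). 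That the maximizer of this factorized likelihood subject to the SGO constraints is exactly the one described by Theorem \ref{th:max} --- empirical generation distributions for states in no misalignment, the common value $\frac{\sum_c n^*}{\sum_c\sum_i n_i}$ for the shared maximum generation probability of each connected component $c$ of the misalignment graph, and the remaining mass allocated as in Lemma \ref{lem:remain} --- is precisely the content of Theorem \ref{th:max}, resting on Lemmas \ref{lem:maxp}--\ref{lm:max} and Corollary \ref{co:same}.

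Combining the two yields $\max_{\mathcal{M}} P(\mathcal{D}\mid\mathcal{M}) = \max_{o\in\mathcal{O}} P(\mathcal{D}\mid\mathcal{M}_o)$. Hence iterating over every $o\in\mathcal{O}$, optimizing the parameters per Theorem \ref{th:max}, and retaining the SGO $o^\star$ attaining the largest likelihood returns a globally optimal MMC, namely $\mathcal{M}_{o^\star}$ together with its optimized parameters --- which is the claim. The remaining points (finiteness of $\mathcal{O}$, that the output carries the optimized parameters, uniqueness up to ties) are routine bookkeeping.

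The main obstacle is the second step, more precisely upgrading Theorem \ref{th:max} from asserting a (local) maximum within $\mathcal{M}_o$ to asserting the global maximum within $\mathcal{M}_o$ --- a point the paper explicitly defers. As literally stated, the finite search only certifies the best MMC among the per-SGO local optima, not a global optimum over all MMCs. To close this one would argue that, once the Corollary \ref{co:same} equalities and the Lemma \ref{lem:remain} allocation are imposed, each connected component's likelihood is unimodal (indeed log-concave) in its single shared probability $p$ on the admissible interval --- as it visibly is in the special case of Eq. \eqref{eq:example}, where it reduces to $(1-p)^3 p^8$ --- and that the SGO-induced inequalities relating maximum generation probabilities across distinct components are inactive at the Theorem \ref{th:max} assignment. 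If one instead reads parameter optimization according to Theorem \ref{th:max} as denoting whichever parameter setting is globally optimal within $\mathcal{M}_o$, then only the finite-cover bookkeeping above is needed and the argument goes through verbatim.
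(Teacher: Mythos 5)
Your decomposition is exactly the route the paper intends: the paper's entire proof of this theorem is the single sentence ``This is a direct result of Theorem \ref{th:max},'' and your finite-cover argument --- every MMC induces an SGO under consistent tie-breaking, so the model space is $\bigcup_{o\in\mathcal{O}}\mathcal{M}_o$ with $\mathcal{O}$ finite, hence $\max_{\mathcal{M}}P(\mathcal{D}\mid\mathcal{M})=\max_{o}\max_{\mathcal{M}\in\mathcal{M}_o}P(\mathcal{D}\mid\mathcal{M})$ --- is the correct and previously unwritten bookkeeping behind that sentence. Your compactness/continuity remark for attainment of the inner maximum is also a detail the paper silently assumes.

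The gap you flag in your final paragraph is genuine, and it is a gap in the paper's own chain of reasoning, not merely in your reconstruction: Theorem \ref{th:max} asserts only a \emph{local} maximum within $\mathcal{M}_o$ and explicitly defers the local-versus-global question to future work, yet the present theorem claims the iteration returns the maximizer ``among all possible MMC models.'' Taking the two statements literally, the procedure only certifies the best among the per-SGO local optima, so the global claim does not follow without an additional argument. Your proposed repair --- log-concavity of each connected component's likelihood in its single shared probability $p$ (as in the $(1-p)^3p^8$ reduction of Eq.~\eqref{eq:example}), plus verifying that the cross-component ordering constraints imposed by the SGO are inactive at the Theorem \ref{th:max} assignment --- is the right shape of what is missing. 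The second of these is not automatic: equalizing the maximum generation probabilities within one component can in principle move them past the (data-aligned) maximum probability of a state in another component, creating a new violation of the SGO that the paper's construction does not check for. So your proposal is faithful to the paper's approach, fills in the parts the paper omits, and correctly isolates the one step that neither you nor the paper actually proves.
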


This is a direct result of Theorem \ref{th:max}.
Since the process for determining the optimal set of parameters given the SGO requires only going through the data once, the computational complexity of the learning algorithm is $O(|S|!| \mathcal{D}|)$. 
The complexity is factorial in $|S|$ but only linearly in $\mathcal{D}$.
Also, note that the complexity is indifferent to the order of the MMC,
which allows the optimal solution to be computed for MMCs with small state spaces.
%\subsubsection{Procedure for MMC Parameter Optimization} 
The procedure for MMC parameter optimization is summarized below:

\begin{enumerate}
    \item For each possible $o \in \mathcal{O}$
    \begin{itemize}
    \item Estimate the generation probabilities from data with $o$
    \item Determine the misalignments and subgraphs
    \item Optimize the parameters according to Theorem \ref{th:max}
    \item Compute $P(\mathcal{D} | \mathcal{M}_o)$ with the optimized parameters
     \end{itemize}
    \item Return the SGO achieving the maximum $P(\mathcal{D} | \mathcal{M}_o)$ along with the optimized parameters
\end{enumerate}

\subsection{Towards More Efficient Optimization}
The solution above, however, is intractable
since the number of SGOs can be large when the state space is large. 
In the following, we aim to establish more efficient solutions at the cost of optimality. 

\subsubsection{Hill Climbing} 
% One observation from the previous theoretical results is that the parameter optimization depends very much on the SGO. 
% Instead of search exhaustively for the best SGO, 
% the idea here is to design an incremental search process for SGO that produces a better SGO at each step, similar to that in hill climbing. 

The idea is simple. Consider an initial SGO, $o$. 
We aim to incrementally change it to increase the data likelihood.
For each update, we consider swapping the order of a state pair in $o$.
Denote the two states as $s_i$ and $s_j$, with  $x_i$ and $x_j$ as their indices in the current SGO, respectively. 
We consider swapping $s_i$ and $s_j$ while keeping the others unchanged in the SGO. 
This would only require changes to the parameters for states that are in between $s_i$ and $s_j$. 
%The implementation can take advantage of it for more efficiency. 
At any iteration, we select a pair of states to swap that can lead to an improvement until no such state pair can be found after checking all possible pairs. 
The computational complexity for each check is only linear in the data, i.e., $|\mathcal{D}|$, although the number of swaps needed before convergence cannot be predetermined. 
Note that the resulting SGO would be a local maximum. 

\subsubsection{Greedy Heuristic}
A simple heuristic is considered that
% Even though the learning algorithm for MMC runs in linear time with respect to the data size, 
% it is combinatorial in $K$.
%A greedy heuristic would work by first determining the SGO,
%since parameter optimization is easy with a given SGO. 
%Similar to above, one simple heuristic is %to 
orders the states based on 
their maximum generation probabilities estimated from the data. 
At any step, we consider the remaining states as possible candidates for the state of the next highest priority in the SGO.
This step requires going through the data once while ignoring the data entries that are generated by states of higher priorities (i.e., assigned in the previous greedy steps).
The state with the maximum data generation probability among the remaining state is chosen. 
%Although the SGO is likely to be highly suboptimal, it is computationally favorable. 
Such a heuristic results in an approximate estimation that runs in $|\mathcal{D}||S|$, allowing MMC to handle large domains.

% One desirable property of the simplest MMC model (with single generation state) is that it does not depend on K. 
%Its computational complexity becomes linear with respect to $K$.

\subsection{Multiple Generation States}

The above result can be easily extended to define a Max Markov Chain with multiple generation states. 
When there are $N (N \leq K)$ generation states, 
it corresponds to $N!$ generation configurations (permutations of generating positions) for any $N$ states, compared to $1$ with basic MMC.
Note that MMC only considers the relative order between the generation states in the generation configuration (i.e., a state appears before or after another state in the generating positions), not their absolute relative positions.  
Each configuration can thus generate data in $C_K^N$ different generating positions. 
Parameter optimization is mostly unchanged other than that any data entry in the form of $s_{t:t+K-1} \rightarrow s_{t+K}$ may be generated by any one of $C_K^N$ generating positions and the presence of any of them counts as the presence of the corresponding generation configuration. 
The configuration with the maximum generation probability is then chosen as the configuration that generates the next state, similar to the generating state in the MMC.
There is an interesting connection between MMC and HMC: as the number of generation states increases in MMC, MMC would converge gradually to HMC. 
In particular, when $N = K$, MMC is equivalent to a $K$-order Markov chain.

\section{Evaluation}

\begin{figure}[!ht]
  \centering
  \includegraphics[width=1\linewidth]{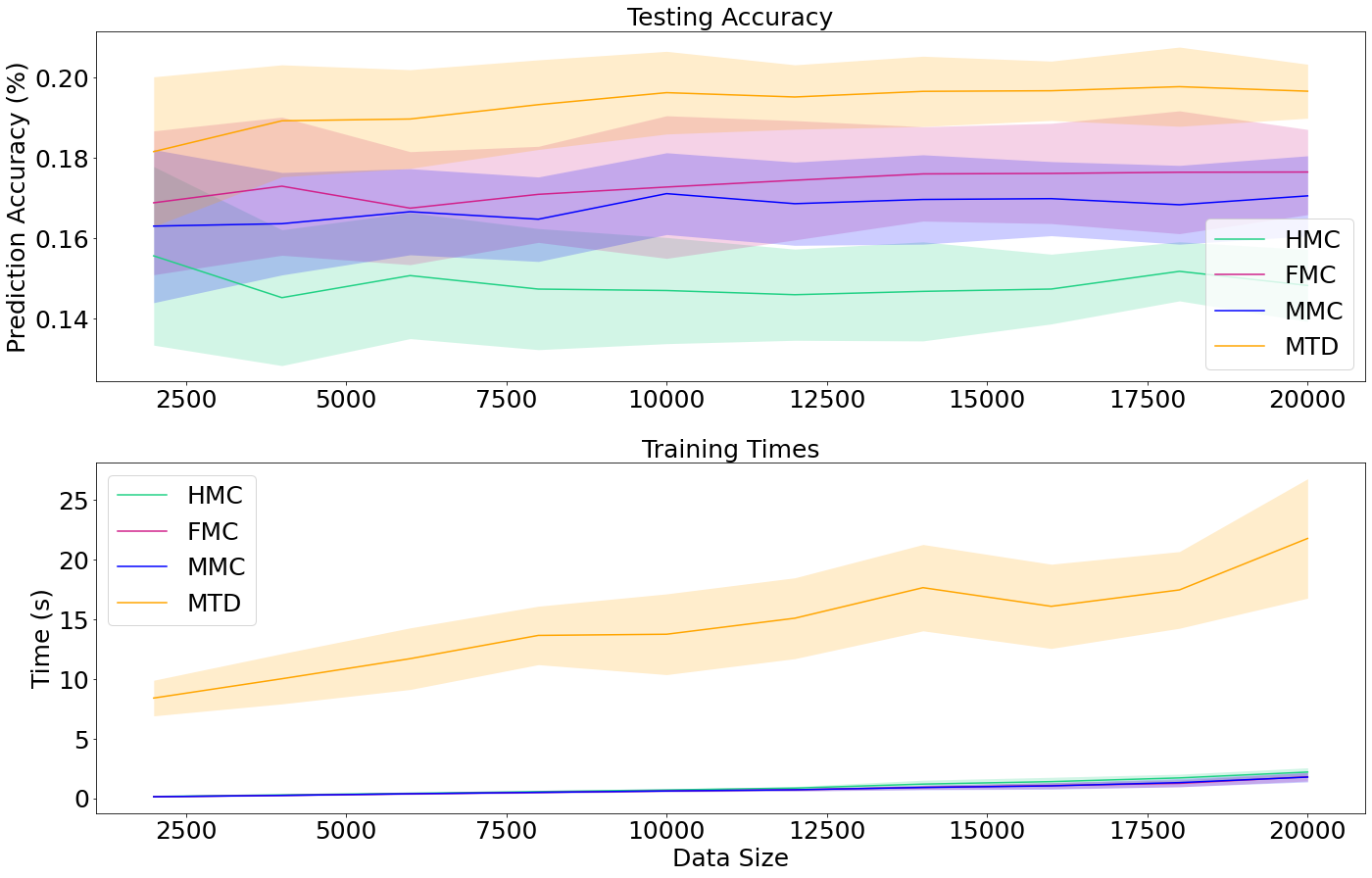}
  \caption{Results for HMC data while varying the data size.  The state size is $7$ and order size is $5$.}
  \label{fig:hmc-data}
  \vskip-10pt
\end{figure}

\begin{figure}[!ht]
  \centering
  \includegraphics[width=1\linewidth]{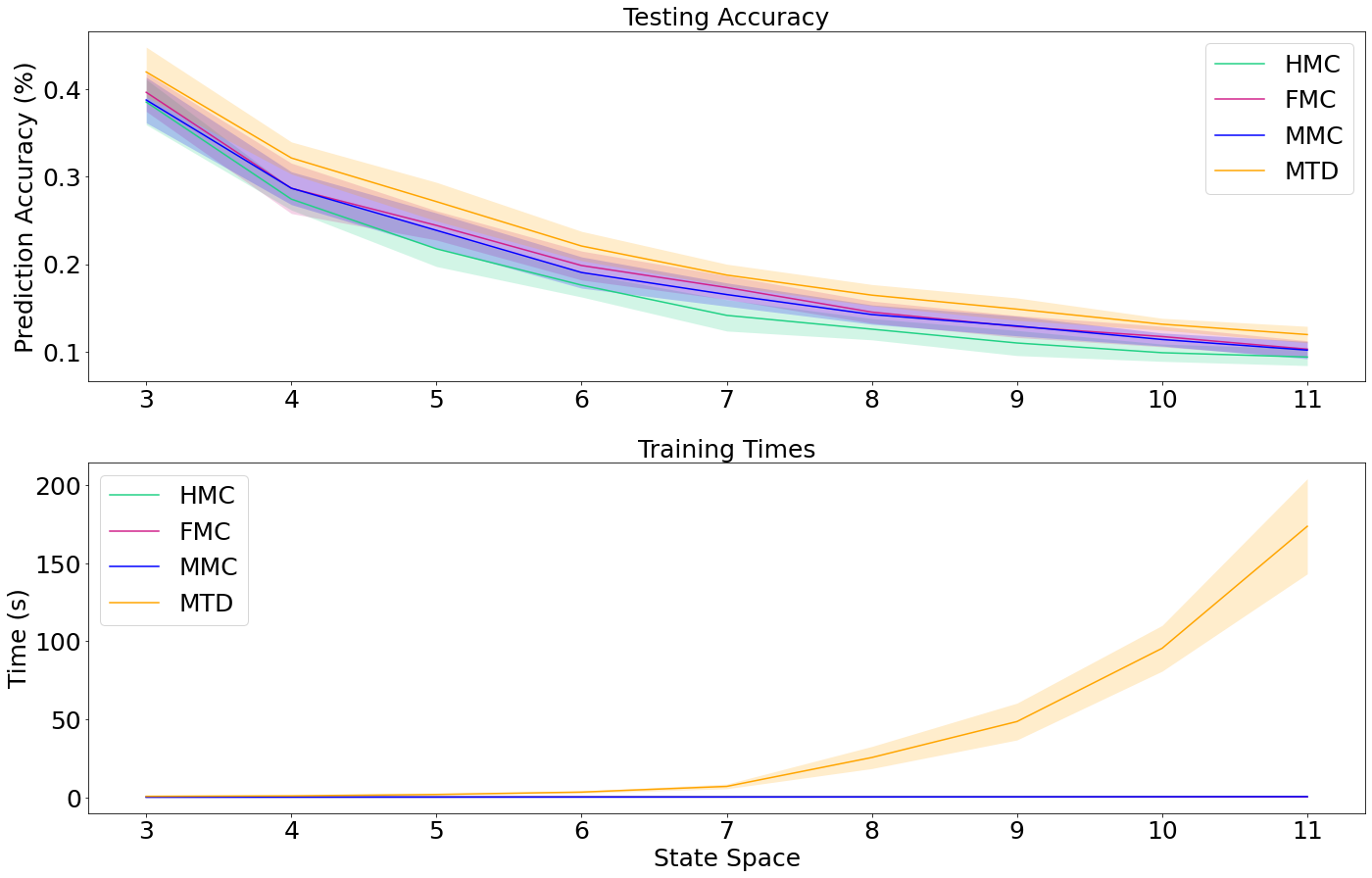}
  \caption{Results for HMC data while varying the state space size.  The data size is 5k and order size is $5$.}
  \label{fig:hmc-state}
  \vskip-10pt
\end{figure}

\begin{figure}[!ht]
  \centering
  \includegraphics[width=1\linewidth]{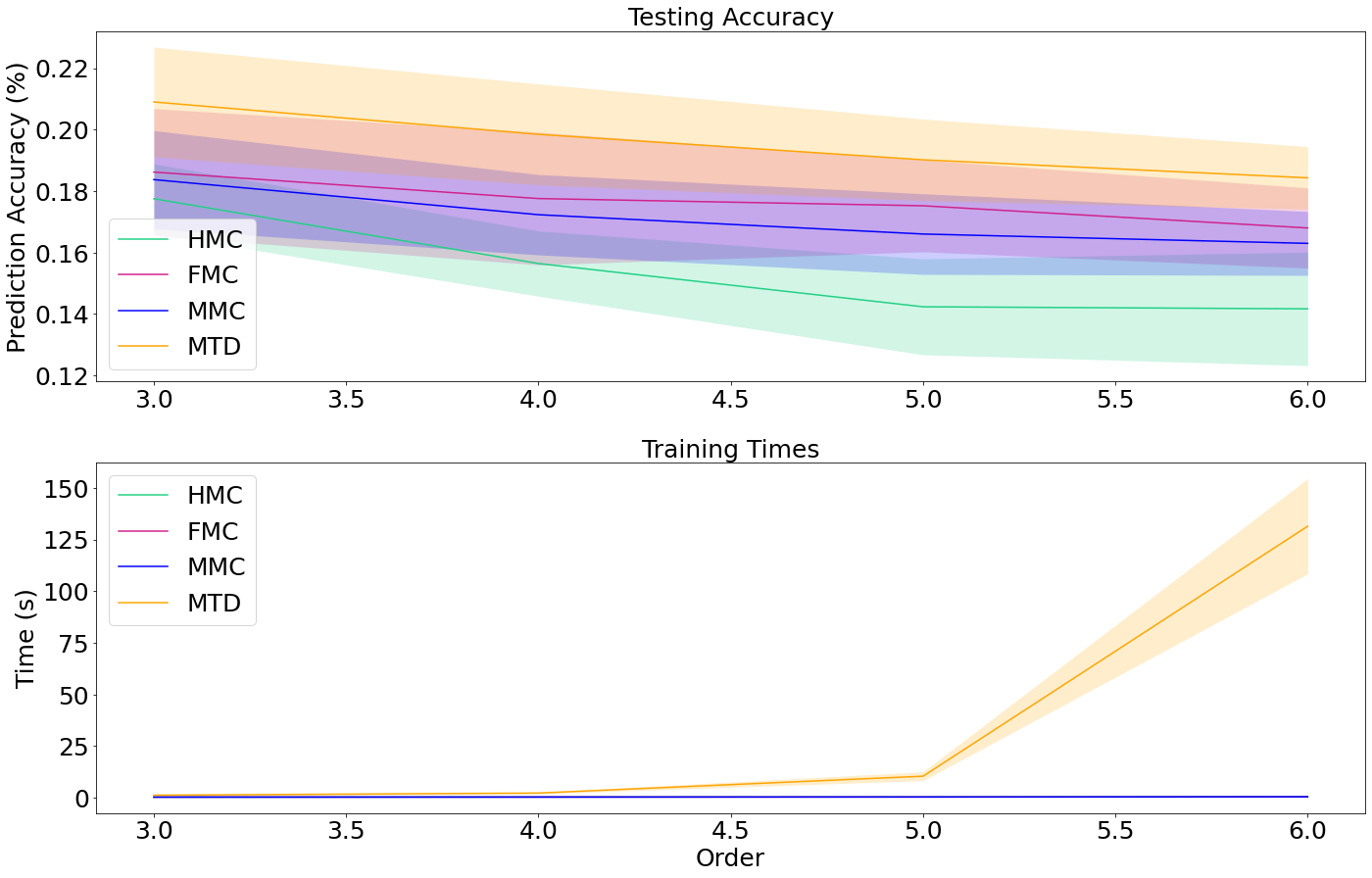}
  \caption{Results for HMC data while varying order size.  The data size is 5k and state size is $7$.}
  \label{fig:hmc-order}
  \vskip-10pt
\end{figure}
In the evaluation, we focus on generating different types of synthetic data to verify the properties of MMC model and illustrate the limitations and advantages of MMC, i.e., when MMC model may be beneficial. 
Extensions of MMC to partial observability, multiple generation states, factored models~\cite{sallans2004reinforcement}, decision models~\cite{russell2010artificial}, etc., will be studied in future work. 
We will also investigate interesting applications of MMC, such as for intention recognition~\cite{sukthankar2014plan} in future work.  
For the purpose of comparison, 
we chose the High-order Markov Model (HMC),
a popular approximate HMC model (MTD~\cite{raftery1985model}), 
and the First-order Markov Model (FMC). 
We compared MMC with these three models with three types of high-order data:
1) HMC data: data generated under the HMC model assumption (Eq. \eqref{eq:hmc-data}); 2) MMC data: data generated under the MMC model assumption (Eq. \eqref{eq:max}); 3) Causal data: data generated under the assumption that the appearance of a state in any previous state or lag is associated with a high probability of generating another specific state. 
All implementations are in Python. 
Since we observe that MMC with the greedy heuristic performed almost equivalently to hill climbing and the analytical solution method, we chose to only show results for MMC with the greedy heuristic.  
Experiments are run on 
%These experiments were ran on  
%\href{https://docs.paperspace.com/core/compute/machine-types/}
Paperspace C7 Instances including 12 vCPUs and 30GB of RAM.  
Each data point is averaged over $30$ runs. 
For each type of data, we test how the different models respond to changes in training data size, state space size, and order size. 

\subsubsection{HMC data}

% \begin{figure}[!ht]
%   \centering
%   \includegraphics[width=1\linewidth]{fig/hmm_data_dataset.png}
%   \caption{Results for HMC data while varying the data size.  The state size is $7$ and order size is $5$.}
%   \label{fig:hmc-data}
%   \vskip-10pt
% \end{figure}

% \begin{figure}[!ht]
%   \centering
%   \includegraphics[width=1\linewidth]{fig/HMM_Data_State.png}
%   \caption{Results for HMC data while varying the state space size.  The data size is 5k and order size is $5$.}
%   \label{fig:hmc-state}
%   \vskip-10pt
% \end{figure}

% \begin{figure}[!ht]
%   \centering
%   \includegraphics[width=1\linewidth]{fig/HMM_Data_order.png}
%   \caption{Results for HMC data while varying order size.  The data size is 5k and state size is $7$.}
%   \label{fig:hmc-order}
%   \vskip-10pt
% \end{figure}

HMC data is high-order Markov data that satisfies the HMC data generation assumption (Eq. \eqref{eq:hmc-data}). 
Hence, it is expected that MMC would not be able to handle HMC data well. 
The results are presented in Figs. \ref{fig:hmc-data}, 
\ref{fig:hmc-state},
and \ref{fig:hmc-order}. 
First, we can see that HMC model performed poorly even after $20k$ training samples, illustrating its sample inefficiency. 
MTD dominated the others in almost all test cases given its smaller parameter size, which makes it more sample efficient than HMC. 
However, it also used significantly more time than the other models for training. 
MTD appeared to be running linearly with respect to the data size but exponentially in the order and state sizes. 
% It confirms that high-order models are sample inefficient. 
% MTD, on the other hand, requires significant training time compared to others.
MMC performed as badly as FMC model (but better than HMC), since each model makes its own assumptions about the data (which do not hold here). 
Even though the MTD model also makes data assumptions, they seem to be milder and the model thus fitted better under randomly generated HMC data.

\subsubsection{MMC data}

The results with MMC data are presented in Figs. \ref{fig:mmc-data}, 
\ref{fig:mmc-state},
and \ref{fig:mmc-order}. 
Results show that MMC indeed outperformed all the other models under MMC data,
which is one type of HMC data. 
What is more interesting was that MTD, again, used significantly more training time than the other models but achieved a performance similar to FMC: it mostly failed to account for the MMC data, which suggested a limitation of MTD models for modeling HMC data. 
Meanwhile, HMC still suffered gravely from sample inefficiency.

\subsubsection{Causal Data}

A common type of data that we may frequently encounter is causal data (see our motivating example). 
%Our motivating example shows how such data may be prevalent in real life. 
Note that causal data only approximately satisfies the MMC data assumptions: it does not restrict that only one lag can generate the current state. 
Hence, it imposes a challenge for MMC. 
The results with causal data are presented in Figs. \ref{fig:causal-data}, 
\ref{fig:causal-state},
and \ref{fig:causal-order}. 
Results show that MMC was able to generalize to this type of data quite well. 
In general, it outperformed the other models on this type of data. 
In Fig. \ref{fig:causal-data}, you can see that MTD model caught up toward the end as more data was provided, which showed that MMC model was more sample efficient than MTD. 
This observation was further confirmed in Fig. \ref{fig:causal-state},
and \ref{fig:causal-order}, 
where MTD started comparable to MMC but failed behind MMC as the state size or order size increased (hence more training data would be needed).

\begin{figure}[!ht]
  \centering
  \includegraphics[width=1\linewidth]{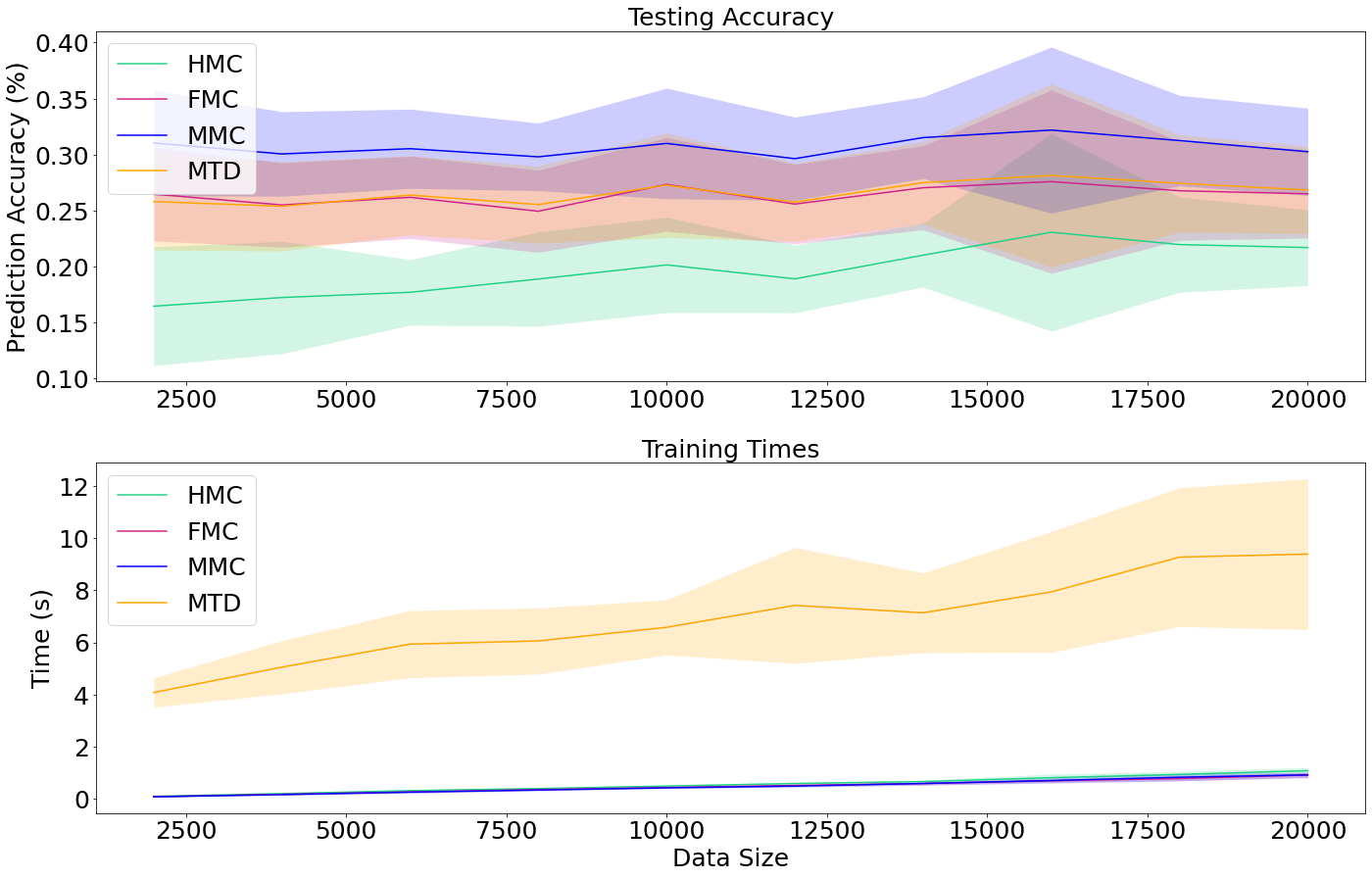}
  \caption{Results for MMC data while varying the data size.  The state size is $7$ and order size is $5$.}
  \label{fig:mmc-data}
  \vskip-10pt
\end{figure}

\begin{figure}[!ht]
  \centering
  \includegraphics[width=1\linewidth]{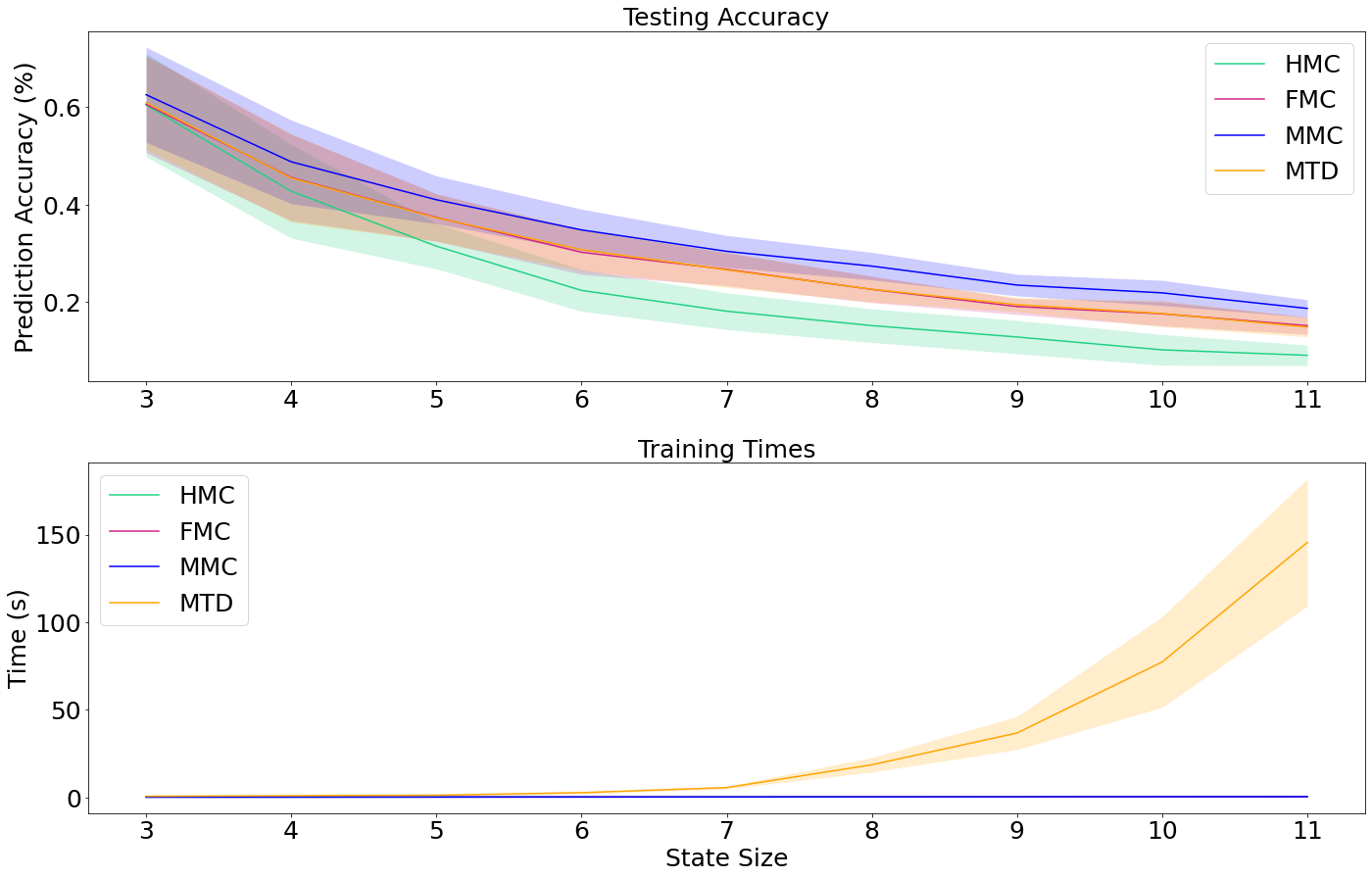}
  \caption{Results for MMC data while varying the state space size.  The data size is $5k$ and order size is $5$.}
  \label{fig:mmc-state}
  \vskip-10pt
\end{figure}

\begin{figure}[!ht]
  \centering
  \includegraphics[width=1\linewidth]{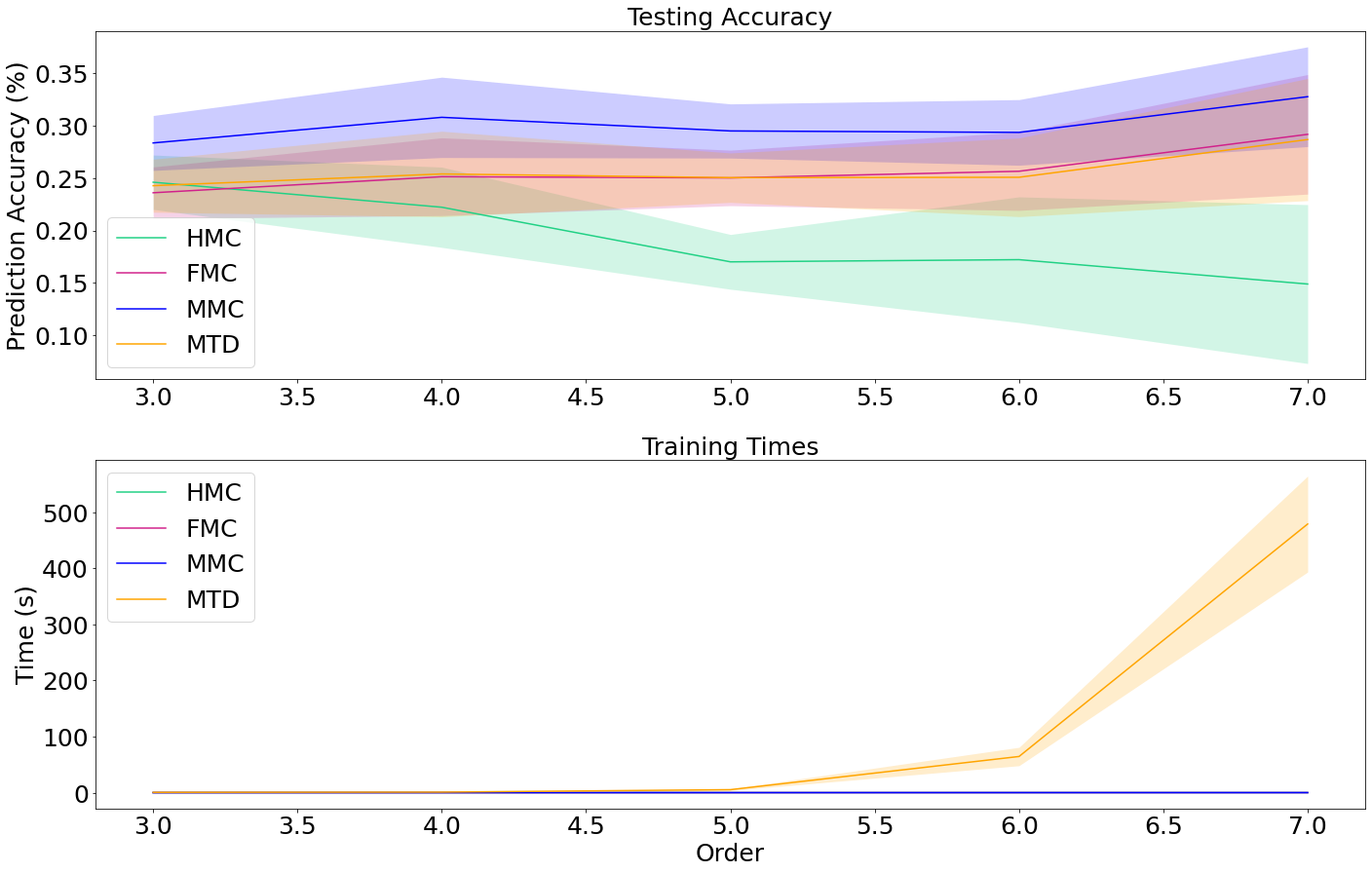}
  \caption{Results for MMC data while varying order size.  The data size is $5k$ and state size is $7$.}
  \label{fig:mmc-order}
  \vskip-10pt
\end{figure}

\begin{figure}[!ht]
  \centering
  \includegraphics[width=1\linewidth]{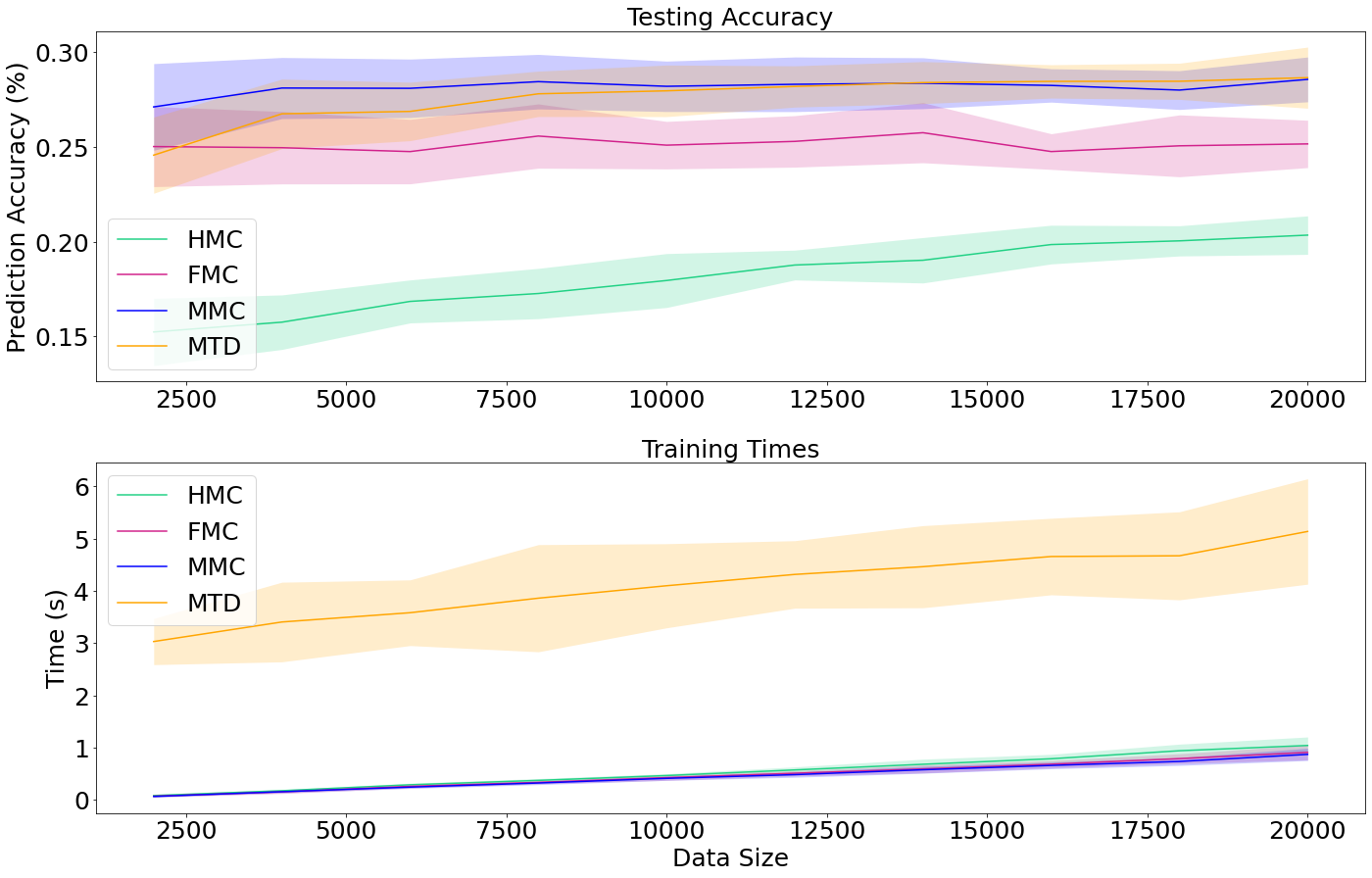}
  \caption{Results for causal data while varying the data size.  The state size is $7$ and order size is $5$.}
  \label{fig:causal-data}
  \vskip-11pt
\end{figure}

\begin{figure}[!ht]
  \centering
  \includegraphics[width=1\linewidth]{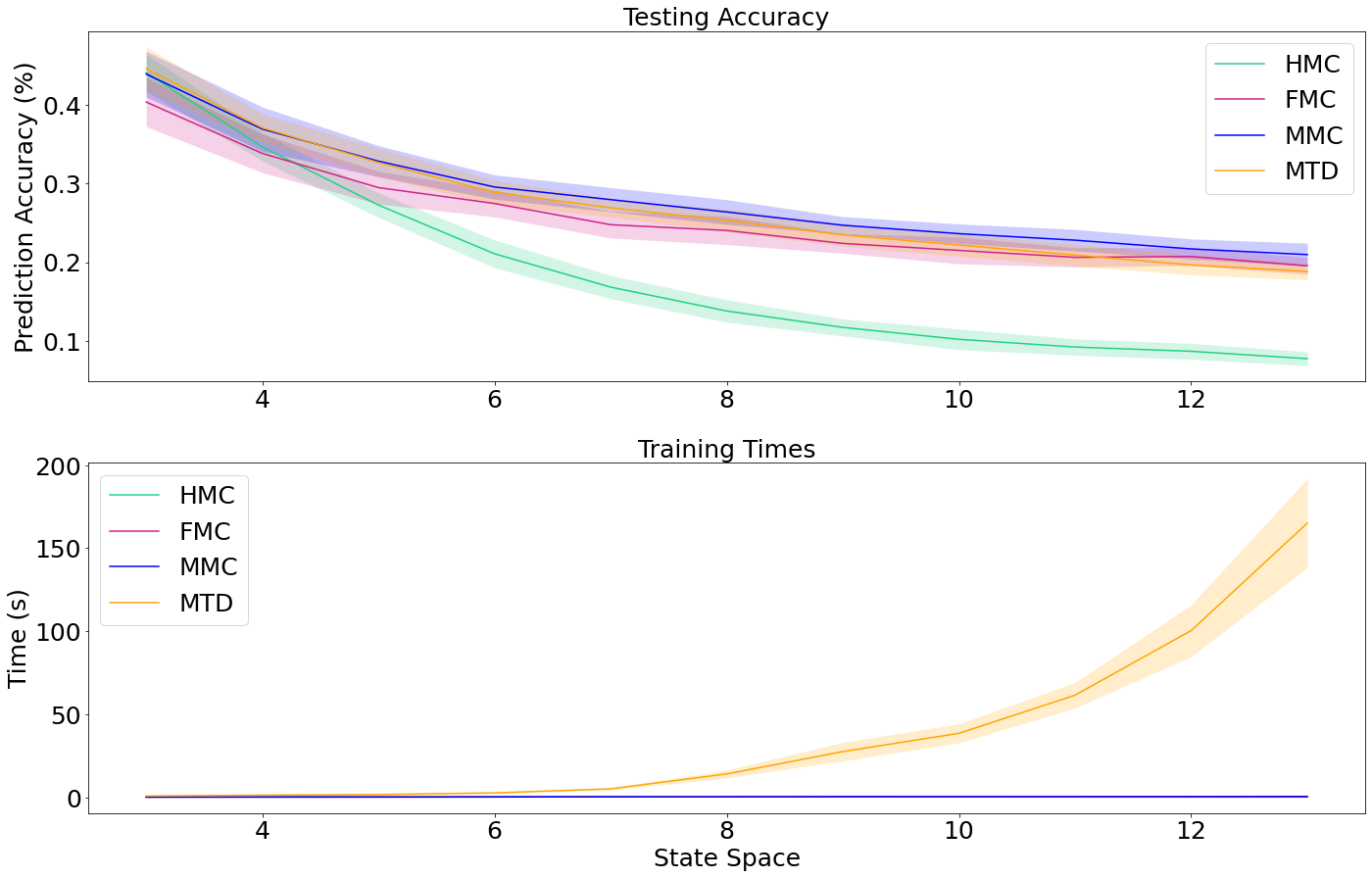}
  \caption{Results for causal data while varying the state space size.  The data size is 5k and order size is $5$.}
  \label{fig:causal-state}
  \vskip-12pt
\end{figure}

\begin{figure}[!ht]
  \centering
  \includegraphics[width=1\linewidth]{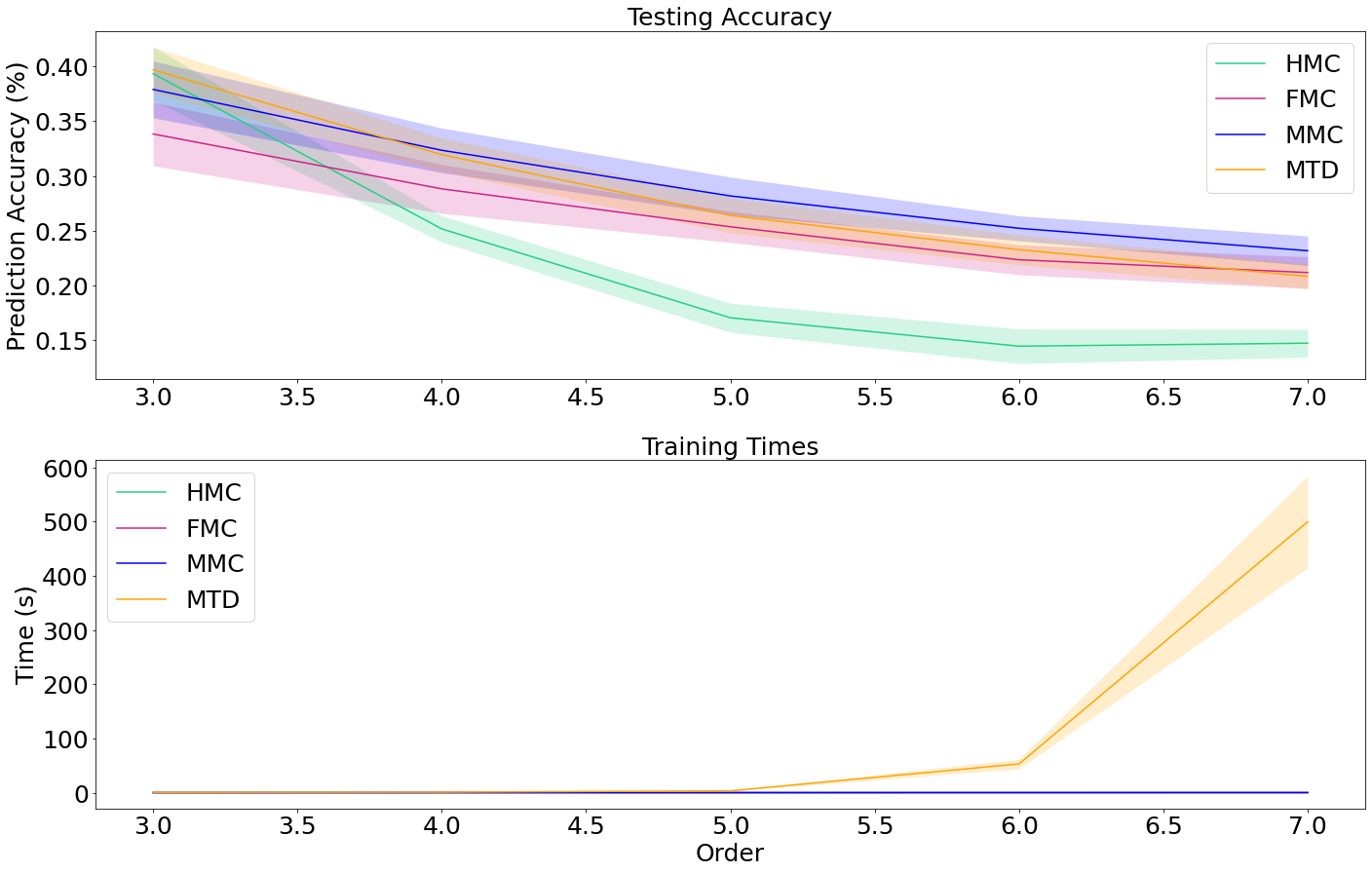}
  \caption{Results for causal data while varying order size.  The data size is 5k and state size is $7$.}
  \label{fig:causal-order}
  \vskip-13pt
\end{figure}

\section{Conclusions}

In this paper, we introduced the Max Markov Chain (MMC) as a novel model for stochastic processes. 
The motivation was to construct an efficient model that enforced parsimony in model structure to model a subset of high-order processes that were useful. 
The simple model structure also enabled the model to scale to large domains. 
We provided an analytical solution for parameter estimation and formally proved it being a local maximum.
Approximate solutions were presented based on hill climbing and a greedy heuristic. 
Results verified that MMC was efficient at handling MMC data, able to generalizing to causal data (which is a common type of data), and scalable to large domains. 
Results also hinted on other domains (e.g., multi-agent task planning and human-robot interaction) where MMC would be expected to excel. 

\bibliography{aaai23}

% \section{Acknowledgments}
% AAAI is especially grateful to Peter Patel Schneider for his work in implementing the original aaai.sty file, liberally using the ideas of other style hackers, including Barbara Beeton. We also acknowledge with thanks the work of George Ferguson for his guide to using the style and BibTeX files --- which has been incorporated into this document --- and Hans Guesgen, who provided several timely modifications, as well as the many others who have, from time to time, sent in suggestions on improvements to the AAAI style. We are especially grateful to Francisco Cruz, Marc Pujol-Gonzalez, and Mico Loretan for the improvements to the Bib\TeX{} and \LaTeX{} files made in 2020.

% The preparation of the \LaTeX{} and Bib\TeX{} files that implement these instructions was supported by Schlumberger Palo Alto Research, AT\&T Bell Laboratories, Morgan Kaufmann Publishers, The Live Oak Press, LLC, and AAAI Press. Bibliography style changes were added by Sunil Issar. \verb+\+pubnote was added by J. Scott Penberthy. George Ferguson added support for printing the AAAI copyright slug. Additional changes to aaai23.sty and aaai23.bst have been made by Francisco Cruz, Marc Pujol-Gonzalez, and Mico Loretan.

% \bigskip
% \noindent Thank you for reading these instructions carefully. We look forward to receiving your electronic files!

\end{document}